\theoremstyle{definition}
\theoremstyle{plain}
\newtheorem{theorem}{Theorem}
\newtheorem{lemma}{Lemma}
\theoremstyle{definition}
\newtheorem{definition}{Definition}
\theoremstyle{remark}
\renewcommand{\hat}[1]{\widehat{#1}}
\newcommand{\bX}{{\bf X}}
\newcommand{\bx}{{\bf x}}
\newcommand{\bw}{{\bf w}}
\newcommand{\bz}{{\bf z}}
\newcommand{\mnorm}[1]{\left\vert\kern-1.5pt\left\vert\kern-1.5pt\left\vert #1\right\vert\kern-1.5pt\right\vert\kern-1.5pt\right\vert}
\newcommand{\tc}{\tilde{C}}
\newcommand{\tp}{\tilde{\pi}}
\begin{document}

\title{ Sparse K-Means with $\ell_{\infty}/\ell_0$ Penalty for High-Dimensional Data Clustering\footnote { Xiangyu Chang, Yu Wang and Zongbe Xu are in the Department of Statistics, Xi'an Jiaotong University (Email:xiangyuchang@gmail.com, shif.wang@gmail.com and zbxu@mail.xjtu.edu.cn). Rongjian Li is in the Department of Computer Science, Old Dominion University (Email:rli@cs.odu.edu).}}
\author{Xiangyu Chang, Yu Wang, Rongjian Li and Zongben Xu}

\date{}

\maketitle
\vspace{-1cm}

\begin{abstract}
Sparse clustering, which aims to find a proper partition of an extremely high-dimensional data set with redundant noise features, has been attracted more and more interests in recent years. The existing studies commonly solve the problem in a framework of maximizing the weighted feature contributions subject to a $\ell_2/\ell_1$ penalty. Nevertheless, this framework has two serious drawbacks: One is that the solution of the framework unavoidably involves a considerable portion of redundant noise features in many situations, and the other is that the framework neither offers intuitive explanations on why this framework can select relevant features nor leads to any theoretical guarantee for feature selection consistency.

In this article, we attempt to overcome those drawbacks through developing a new sparse clustering framework which uses a $\ell_{\infty}/\ell_0$ penalty.
First, we introduce new concepts on optimal partitions and noise features for the high-dimensional data clustering problems, based on which the previously known framework can be intuitively explained in principle. Then, we apply the suggested $\ell_{\infty}/\ell_0$ framework to formulate a new sparse k-means model with the $\ell_{\infty}/\ell_0$ penalty ($\ell_0$-k-means for short). We propose an efficient iterative algorithm for solving the $\ell_0$-k-means. To deeply understand the behavior of $\ell_0$-k-means, we prove that the solution yielded by the $\ell_0$-k-means algorithm has feature selection consistency whenever the data matrix is generated from a high-dimensional Gaussian mixture model. Finally, we provide experiments with both synthetic data and the Allen Developing Mouse Brain Atlas data to support that the proposed $\ell_0$-k-means exhibits better noise feature detection capacity over the previously known sparse k-means with the $\ell_2/\ell_1$ penalty ($\ell_1$-k-means for short).

\end{abstract}

\noindent{\bf Keywords:\/} High-Dimensional Data Clustering, Sparse K-means, $\ell_0$, $\ell_1$ and $\ell_{\infty}$ Penalty


\section{Introduction}
Clustering is an unsupervised learning technique for discovering hidden group structures in
data sets. It partitions the whole sample set into different groups such that each group has its own unique property. The commonly used algorithms for clustering include k-means clustering~\cite{macqueen1967,steinhaus1956}, hierarchical clustering~\cite{hastie2009elements}, model-based clustering~\cite{dempster1977} and spectral clustering~\cite{von2007}.

With rapid expansion of the capacity in automatic data generation and acquisition, we encountered the clustering problems with a huge number of features. The conventional clustering algorithms treat all features equally, and attribute them with the same weight in a grouping process. In many real situations, however, only a small portion of features is responsible and important for determining the cluster structures. For example, only a small portion of genes is responsible for some biological activities such as protein synthesis. If each gene is viewed as a feature, those genes activated in the biological process are known as {\it relevant features}, and the others can be thought as {\it noise features}. The large size of noise features usually makes traditional clustering methods unpractical and inefficient. Thus, how to identify relevant features and eliminate noise features simultaneously and automatically is of great importance for clustering of high-dimensional data.

There have been several approaches to address the feature selection problem for clustering. One approach is to do dimension reduction before clustering, say, using principle components analysis (PCA)~\cite{chang1983} or nonnegative matrix factorization~\cite{lee1999} (NMF). However, there are evidences showing that principal components do not actually provide reasonable partition of the original data~\cite{chang1983}. Another approach is to perform the penalized model-based clustering. This approach assumes that the data matrix is generated from a known mixture distribution. Then the clusters are detected by fitting data into a log-likelihood function with $\ell_1$ penalty~\cite{raftery2006,wang2008,pan2007}. Nevertheless, the computational hurdle of fitting such model is still formidable, especially when the dimension is very high. Recently, Witten and Tibshirani~\cite{witten2010} suggested a framework of sparse clustering. The framework optimizes a criterion involving feature weights using both a $\ell_1$ penalty and a $\ell _2$ penalty ($\ell_2/\ell_1$ penalty in short). They particularly developed a sparse k-means method for clustering, called $\ell_1$-k-means, which maximizes the weighted Between-Cluster Sum of Squares (BCSS) with the $\ell_2/\ell_1$ penalty, and used Gap Statistics~\cite{tibshirani2001} to select the tunning parameter for determining the number of non-zero weighted features. The $\ell_1$-k-means works well, but still often keeps a considerable portion of redundant features. In other words, many noise features are still not depressed in the final clustering results. This phenomenon has been found in the experimental example they used in their seminal paper~\cite{witten2010}: when 60 observations were generated from 3 clusters involving 50 relevant features and 150 noise features, the $\ell_1$-k-means kept all the noise features in the final clustering result. 

It is known that the $\ell_0$ penalty is the most essential sparsity measure, while the $\ell_1$ penalty is only a best convex relaxation of $\ell_0$ penalty. Thus, we naturally expect to use the $\ell_0$ penalty to improve the feature selection performance. However, directly putting the $\ell_0$ penalty into the sparse clustering framework~\cite{witten2010} makes the problem intractable. Even if it is tractable, the solution defined cannot be interpreted. To overcome this difficulty, a new sparse clustering framework using both a $\ell_{\infty}$ and a $\ell_0$ penalty ($\ell_{\infty}/\ell_0$ penalty for short) is proposed. As a realization of this new framework, we develop a $\ell_0$-k-means method for performing clustering. We find that the $\ell_0$-k-means is extremely easy to implement and interpret. What's more, we can show that the $\ell_0$-k-means exhibits much better noise feature detection capacity compared with $\ell_1$-k-means.

In the theoretical point of view, one of the most important issues in high-dimensional data analysis is to analyze the model behavior as the dimension (number of feature or number of variable) grows with the sample size. For some specific supervised problems, there are a huge number of literatures~\cite{zhao2006model,wainwright2009sharp,negahban2012unified,fan2010} in this field. For example,~\cite{zhao2006model,wainwright2009sharp,fan2010} proved the feature (variable) selection consistency property for the penalized regression models and~\cite{negahban2012unified} developed a unified framework for analyzing error bounds of M-estimators with decomposable regularization for high-dimensional data. However, there is a little theoretical analysis for high-dimensional data clustering problems although a few high-dimensional data clustering methods~\cite{pan2007,witten2010} have been reported. Why it is so is that it is really difficult to rigorously define what clusters the clustering methods are looking for and what features are the noise features in the high-dimensional setting (when the dimension is comparable to or larger than the sample size).

In this paper we circumvent this difficulty by redefining the notion on what is an
optimal partition and what are the noise features in high-dimensional statistics setting from a very intuitive way. We then show that the new definitions are well-defined and can be used to interpret the proposed framework and the $\ell_0$-k-means algorithm very explicitly. Based on this, we further prove that the solution of the suggested $\ell_0$-k-means algorithm has a feature selection consistency property.

The rest of the paper is organized as follows. In Section \ref{sparseframework}, we first introduce the necessary notion and notations for the research and then we analyze the classical k-means framework and formalize new definitions of optimal partition and noise features. Based on the new definitions, we present the new sparse clustering framework and the $\ell_0$-k-means. To implement the $\ell_0$-k-means, an efficient iterative algorithm is developed. We further prove that the solution defined by the $\ell_0$-k-means algorithm has a feature selection consistency property under a set of reasonable conditions when the data matrix is generated from a high-dimensional Gaussian mixture model. In Section \ref{simulations}, a series of simulations for synthetic data and Allen Developing Mouse Brain Atlas data set are provided to evaluate the performances of $\ell_0$-k-means, $\ell_1$-k-means and standard k-means. In Section \ref{conclusions}, we conclude the paper with some useful remarks. All the detailed proofs of theoretical results stated in the paper are presented in Section \ref{appendix}, as an appendix of the paper.

\section{Sparse Clustering Framework with $\ell_{\infty}/\ell_0$ Penalty}\label{sparseframework}
\subsection{Notion and Notations}
Let $\bX\in\mathbb{R}^{n\times p}$ be a data set in a matrix form with $n$ observations and $p$ features. We assume that $\bx_i\in\mathbb{R}^p$ and $\bX_j\in\mathbb{R}^n$ are the $i^{th}$ row and $j^{th}$ column of $\bX$ respectively. Thus, $\bX=[\bX_1,\bX_2,\dots,\bX_p]=[\bx_1^\top,\bx_2^\top,\dots,\bx_n^\top]^\top=(x_{ij})_{n\times p}$. It is well known that the standard k-means clustering groups the data by finding a partition $\mathcal{C}=\{{C}_1,{C}_2,\dots,{C}_K\}$ such that the sum of distances between the empirical means ${\mu}=(\mu_1,\mu_2,\dots,\mu_K)^\top$ of the clustering and the corresponding points is minimized. Therefore, it can be formulated as an optimization problem of the form:
\begin{eqnarray}\label{kmean}
\min_{\mathcal{C},\mu}\sum_{k=1}^K\sum_{\bx_i\in C_k}d(\bx_i,\mu_k),
\end{eqnarray}
where $d:\mathbb{R}^p\times\mathbb{R}^p\rightarrow\mathbb{R}$ is the so-called {\it dissimilarity measure} satisfying $d(a,a)=0,d(a,b)\geq 0$ and $d(a,b)=d(b,a)$. The dissimilarity measure between vectors $\bx_i$ and $\bx_j$ is very commonly chosen to be the square of Euclidean distance, that is, $d(\bx_i,\bx_j)=\|\bx_i-\bx_j\|^2=\sum_{l=1}^p(x_{il}-x_{jl})^2$.

More generally, an operational definition of clustering can be stated as follows: Given a representation of $n$ objects, find $K$ groups based on a measure of dissimilarity such that objects within the same group are alike but objects in different groups are disparate~\cite{jain2010}. The k-means naturally satisfies this definition if we rewrite (\ref{kmean}) in a form of {\it Within-Cluster Sum of Square (WCSS)} as follows:
\begin{eqnarray}\label{WCSS}
\sum_{k=1}^K\frac{1}{n_k}\sum_{i,i'\in C_k}\sum_{j=1}^pd_{ii'j},
\end{eqnarray}
where $n_k=|C_k|$ is the cardinality of cluster $C_k$ and $d_{ii'j}=(x_{ij}-x_{i'j})^2$. In practice, it is sometimes more convenient to use {\it Between-Cluster Sum of Squares (BCSS)} defined by
\begin{eqnarray}\label{BCSS}
\sum_{j=1}^p\Big\{\frac{1}{n}\sum_{i=1}^n\sum_{i'=1}^nd_{ii'j}-\sum_{k=1}^K\frac{1}{n_k}\sum_{i,i'\in C_k}d_{ii'j}\Big\}.
\end{eqnarray}
Note that minimizing WCSS amounts to maximizing BCSS, and so the minimization of (\ref{WCSS}) is equivalent to the maximization of (\ref{BCSS}). Furthermore, if we denote
\begin{eqnarray}\label{aj}
a_j\triangleq\frac{1}{n}\sum_{i,i'}^nd_{ii'j}-\sum_{k=1}^K\frac{1}{n_k}\sum_{i,i'\in C_k}d_{ii'j},j=1,\dots,p,
\end{eqnarray}
then $a_j$ is the $j^{th}$ component of BCSS which can be considered as a function
only with respect to the sample values of the $j^{th}$ feature and the partition $\mathcal{C}$. Note that we have abused $a_{j}$ and $a_j(\mathcal{C})$ here for simplicity, but actually calculating $a_j$ depends on $\mathcal{C}$. With the formulation (\ref{aj}), Witten and Tibshirani~\cite{witten2010} generalized the BCSS form (\ref{BCSS}) to an optimization problem of the general form
\begin{eqnarray}\label{framework}
\max_{\Theta(\mathcal{C})\in D}\Big\{\sum_{j=1}^p f_j(\bX_j,\Theta(\mathcal{C}))\Big\},
\end{eqnarray}
where $f_j(\bX_j,\Theta(\mathcal{C}))$ is a function that involves only the $j^{th}$ feature of the data, and $\Theta(\mathcal{C})$ is a parameter restricted to a set $D$. They further defined a {\it sparse clustering framework} as
\begin{eqnarray}\label{l1framework}
&\max\limits_{\bw, \Theta(\mathcal{C})\in D}&\Big\{\sum_{j=1}^p w_jf_j(\bX_j,\Theta(\mathcal{C}))\Big\}\\\nonumber
&\text{s.t.}&\ \|\bw\|_2\leq 1,\ \|\bw\|_1\leq s,\ w_j\geq 0,\ \forall j,
\end{eqnarray}
where $s$ is a tunning parameter, $\|\cdot\|_2$ is the Euclidean norm, $\|\cdot\|_1$ is the $\ell_1$ norm, and $\bw=(w_1,w_2,\dots,w_p)^\top$. Here, $w_j$ can be interpreted as the contribution of the $j^{th}$ feature to the objective function (\ref{l1framework}). A larger value of $w_j$ indicates a feature that contributes more to the clustering results. Moreover, they replaced $f_j(\bX_j,\Theta(\mathcal{C}))$ by the $a_j$, defined as in (\ref{aj}). (\ref{l1framework}) then becomes the following $\ell_1$-k-means model:
\begin{eqnarray}\label{l1kmeans}
&\max\limits_{\mathcal{C}, \bw}&\sum_{j=1}^pw_j\Big(\frac{1}{n}\sum_{i=1}^n\sum_{i'=1}^nd_{ii'j}-\sum_{k=1}^K\frac{1}{n_k}\sum_{i,i'\in C_k}d_{ii'j}\Big)\\\nonumber
&\text{s.t.}&\ \|\bw\|_2\leq 1,\ \|\bw\|_1\leq s,\ w_j\geq 0,\ \forall j.
\end{eqnarray}

The $\ell_1$-k-means works well, but there are two serious drawbacks. One is that the solution of the $\ell_1$-k-means unavoidably involves a considerable portion of redundant noise features in many situations.  In other words, many noise features are still not depressed in the final clustering results. This phenomenon has been found in the Witten and Tibushirani's~\cite{witten2010} experimental study: when 60 observations were generated from 3 clusters involving 50 relevant features and 150 noise features, the $\ell_1$-k-means kept all the noise features in the final clustering result. And the other is that the $\ell_1$-k-means neither offers intuitive explanations on why it can select relevant features nor offers any theoretical guarantee for feature selection consistency.

Therefore, a natural question is: could we have a new sparse k-means framework within which all those drawbacks of $\ell_1$-k-means can be dismissed? To answer this question, we will reformulate the classical k-means to accommodate the high-dimensional data clustering setting based on a new formulation of definition on the optimal partition and noise features. According to the new formulation, we develop a new $\ell_0$-k-means to overcome the $\ell_1$-k-means' drawbacks.

\subsection{Reformulation of K-Means for High-dimensional Data Clustering}\label{reform}
We start with the definition of optimal partition in the classical k-means clustering model and analyze why it is inappropriate for high-dimensional data clustering. Then we propose an alternative and show the appropriateness of the new definition.

In the classical clustering settings~\cite{pollard1981strong}, the k-means is characterized by their centroids $\mu=(\mu_1,\dots,\mu_K)^\top$, and the optimal $\mu^*$ is defined as the minimizer of an {\it expected risk function}
\begin{equation}\label{risk}
\mu^*\in\arg\min_{\mu}\int\min_{i=1,\dots,K}\|\bx-\mu_i\|^2m(d\bx),
\end{equation}
where $\bx\in\mathbb{R}^p$ is sampled from a probability measure $m(\bx)$.
After getting $\mu^*$, the {\it optimal partition} $\mathcal{C}^*$ of the samples is determined by calculating which centroid each sample is closest to. In this formulation, the dimension of samples is fixed.
For high-dimensional statistical problems, however, the dimension of samples is no longer fixed, and the dimensions of different samples may vary. Such variation
plays a significant role in theoretical behavior of high-dimensional statistics problems, because we really care about the relationship between the number of samples and the number of features~\cite{fan2010}.

Nevertheless, as we dig further into this problem, it seems that there is no reasonable way to extend the previous formulation (\ref{risk}) to fit into the high-dimensional statistics setting. The difficulty lies in the implicit relationship between the centroids $\mu^*$ and the probability measure $m$. When dimension $p$ varies, we have to define different probability measures $m_p$ ($m_p$ means a probability measure varying with $p$). This may result in different optimal centroids $\mu^{*}(p)$ leading to a confusion because as the dimension grows, the same sample might be categorized into different clusters and it is lack of an universal optimal partition that can be used to judge whether the estimated partition $\hat{\mathcal{C}}_{n,p}$ is good or not. An apparent way to define such universal optimal partition is to consider the limit of $\mu^*(p)$ along each dimension. But this breaks down when the limit of $\mu^*(p)$ doesn't exist or different centroids tend to the same limit. This does happen even when probability measure $m_p$ is very simple. {For example, when $m_p$ is the uniform distribution over an unit ball of $p$-dimensional space and $K=2$, we can test that the optimal solution is $\mu_1^*(p)=\frac{2}{p+1}\frac{\Gamma(p/2+1)}{\Gamma((p+1)/2)\Gamma(1/2)}(1,0,..,0),\mu_2^*(p)=-\frac{2}{p+1}\frac{\Gamma(p/2+1)}{\Gamma((p+1)/2)\Gamma(1/2)}(1,0,..,0)$ which both tend to $0$ along any dimension} ({This can be verified by solving Eq. (\ref{risk})} and $\Gamma$ is the standard gamma function). This difficulty is deeply rooted in the intriguing relationship between the cluster centroids $\mu$ and the probability measure $m$. In order to escape from this difficulty, we abandon this framework and try to pave a new way to define the optimal partition. This new way should take the variation of dimensions into consideration, and, in particular, the defined optimal partition should be fixed in the situations when dimension grows.

Our new way is motivated by considering a different version of the risk function (\ref{risk}). Suppose that there are $n$ samples and we approximate the probability measure $m$ by the empirical measure $m_n$. Conditioned on a partition $\mathcal{C}$, we then have (omit the constants)
\begin{eqnarray}
\mathbb{E}\{\min_\mu\int\min_{k=1,\dots,K}\|\bx-\mu_k\|^2m_n(d\bx)|\mathcal{C}\}&=&
\mathbb{E}\{\min_\mu\sum_i\min_{k=1,\dots,K}\|\bx_i-\mu_k\|^2|\mathcal{C}\}\\\nonumber
&=&\mathbb{E}\{\sum_k\frac{1}{n_k}\sum_{i,i'\in C_k}\sum_j(x_{ij}-x_{i'j})^2|\mathcal{C}\}\nonumber.
\end{eqnarray}
Note that this last term is the expectation of WCSS, defined in (\ref{WCSS}). This observation prompts us to define the optimal partition in the following way:

\begin{definition}\label{defoptimal}
Given a data matrix $\bX$,
the {\it optimal partition} $\mathcal{C}^*$ of $\bX$ is the partition that maximizes the expectation of BCSS, i.e.,
\begin{eqnarray}\label{optimialpartion}
\mathcal{C}^*\triangleq \arg\max_\mathcal{C} \sum_{j=1}^p \mathbb{E}[a_j(\mathcal{C})],
\end{eqnarray}
where $a_j$ is defined in Eq. (\ref{aj}).
\end{definition}
From this definition, we can define noise features in a natural way.
\begin{definition}\label{noise}
If the $j^{th}$ feature for any partition $\mathcal{C}$ satisfies
\begin{equation}
\mathbb{E} [a_j(\mathcal{C}^*)]=\mathbb{E} [a_j(\mathcal{C})],
\end{equation}
then this feature is a {\it noise feature}.
\end{definition}

By this definition, a noise feature is a feature that makes all possible partitions attain the same expected BCSS value with respect to this feature. Thus, a noise feature would make no contribution for seeking proper clusters. This is why we call such a feature the noise feature. The features that are not noise features will be called {\it relevant features}.

Theorem \ref{optimial_partition_theory}, whose proof will be given in Appendix, shows that the definition of optimal partition in Definition \ref{defoptimal} is reasonable and, according to Definition \ref{noise}, the noise feature does exist. 

To state Theorem \ref{optimial_partition_theory}, we need some new notion and notations. We assume that each sample $\bx_i,i=1,2,\dots,n,$ equips with an {\it indicator variable} $\bz_{i}=(z_{i1},z_{i2},\dots,z_{iK})^\top$, where $z_{ik}\in\{0,1\}$ and $\sum_{k=1}^Kz_{ik}=1$. If $z_{ik}=1$ then we say that the $i^{th}$ sample belongs to the $k^{th}$ cluster. Furthermore, suppose that the indicator variables are i.i.d which are drawn from a multinomial distribution, and $\bx_i$ is from a distribution $\mathcal{F}_k(\mu_k,\Sigma_k)$ ($\mu_k$ and $\Sigma_k$ are the mean and the covariance matrix for the $k^{th}$ cluster) when $z_{ik}=1$. Thus, $\bx_i$ satisfies a mixture distribution, that is
\begin{equation}\label{step1}
\mathbb{P}(\bx_i|\bz_i)=\prod_{k=1}^K[\mathcal{F}_k(\mu_k,\Sigma_k)]^{z_{ik}}.
\end{equation}

In order to support the reasonability of Definition \ref{defoptimal} and \ref{noise}, we consider a little more specific setting. Suppose that each element $x_{ij}$ is uncorrelated to each other for all $i=1,2,\dots,n,$ and $ j=1,2,\dots,p$, and each $x_{ij}$ obeys $\mathcal{F}(\mu_{ij},1)$ where $\mu_{ij}$ is defined as:
\begin{equation}
\mu_{ij} = \left\{\begin{array}{cc}
  \mu_k & \text{if}\ i\in C_k,j\leq p^* \\
  0& \text{if}\ \forall i, p^*<j\leq p
\end{array}\right.,\label{step2}
\end{equation}
and $\mu_k$ are all constants with $\mu_k\neq\mu_l$ when $k\neq l$.
Thus, we have a {\it natural partition} $\mathcal{C}^*=\{C^*_1,\cdots,C^*_K\}$ based on this setting. Assume that we have an estimated partition ${\tilde{\mathcal{C}}}=\{\tilde{C}_1,\tilde{C}_2,\dots,\tilde{C}_K\}$. Denote $\boldsymbol{\pi}=(\pi_{kk'})\in\mathbb{R}^{K\times K}$ with $\pi_{kk'}$ being the proportion of samples in both ${C}^*_{k}$ and $\tilde{C}_{k'}$. Consequently, $\sum_{k,k'}\pi_{kk'}=1$. For future use, we define also $\tilde{\pi}_{k'}=\sum_{k}\pi_{kk'}$ and $\pi_{k}=\sum_{k'}\pi_{kk'}$. Based on this formulation, we define the Error Clustering Rate (ECR) of ${\mathcal{C}}$ to be one minus its {\it purity}: $ECR({\mathcal{C}})=1-purity({\mathcal{C}})$, where $purity({\mathcal{C}})=\sum_{k'}\max_{k}\{\pi_{kk'}\}$. Obviously, $ECR({\mathcal{C}})=0$ if ${\mathcal{C}}=\mathcal{C}^*$.

Theorem \ref{optimial_partition_theory} is stated as follows.

\begin{theorem}\label{optimial_partition_theory}
If the data matrix $\bX=(x_{ij})_{n\times p}$ is generated according to (\ref{step1}) and (\ref{step2}), then
\begin{itemize}
\item[(I)] for any $p^*<j\leq p$, the $j^{th}$ feature is a noise feature, and for any $\ 1\leq j\leq p^*$ the $j^{th}$ feature is a relevant feature.
\item[(II)] there holds
\begin{equation}
\mathbb{E}[a_j(\mathcal{C}^*)]  \left\{\begin{array}{cc}
  >K-1 & 1\leq j\leq p^* \\
  =K-1&  \text{otherwise}
\end{array}\right..
\end{equation}
\item[(III)] the natural partition $\mathcal{C}^*$ of $\bX$ is its optimal partition. Furthermore, the optimal partition of $\bX$ satisfies
\begin{eqnarray}\label{sigle_optimal}
\mathcal{C}^*= \arg\max_C \mathbb{E}[a_j(\mathcal{C})],\ \forall 1\leq j\leq p^*.
\end{eqnarray}
\end{itemize}
\end{theorem}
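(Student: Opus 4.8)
The plan is to reduce all three parts to a single closed-form evaluation of $\E[a_j(\mathcal{C})]$ for an arbitrary candidate partition $\mathcal{C}=\{\tilde C_1,\dots,\tilde C_K\}$ and then to read off (I)--(III) from its structure. I take the expectation over the feature noise with the partitions held fixed, writing $x_{ij}=\mu_{ij}+\epsilon_{ij}$ where $\E[\epsilon_{ij}]=0$, $\E[\epsilon_{ij}^2]=1$, and the $\epsilon_{ij}$ are uncorrelated across $i$ by (\ref{step1})--(\ref{step2}). For $i\ne i'$, expanding $d_{ii'j}=(\mu_{ij}-\mu_{i'j}+\epsilon_{ij}-\epsilon_{i'j})^2$ and using $\E[\epsilon_{ij}]=0$ together with uncorrelatedness kills the cross terms and gives $\E[d_{ii'j}]=(\mu_{ij}-\mu_{i'j})^2+2$. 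Substituting into (\ref{aj}) splits $\E[a_j(\mathcal{C})]$ into a pure-noise part, which contributes $2$ per pair and collapses to $(n-1)-(n-K)=K-1$, and a mean part, which I would simplify with the elementary identity $\tfrac1{m}\sum_{i<i'}(y_i-y_{i'})^2=\sum_i(y_i-\bar y)^2$ applied to all $n$ samples and inside each $\tilde C_k$. The ANOVA decomposition (total $=$ within $+$ between) then yields
\begin{equation}
\E[a_j(\mathcal{C})]=(K-1)+B_j(\mathcal{C}),\qquad B_j(\mathcal{C})\triangleq\sum_{k=1}^K \tilde n_k\big(\bar\mu_k^{(j)}-\bar\mu^{(j)}\big)^2\ge 0,
\end{equation}
where $\tilde n_k=|\tilde C_k|$, $\bar\mu_k^{(j)}$ is the average of $\mu_{ij}$ over $i\in\tilde C_k$, and $\bar\mu^{(j)}$ is the grand average.

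With this formula the easy parts follow. For a noise feature ($j>p^*$) all the $\mu_{ij}$ vanish, so $B_j(\mathcal{C})\equiv0$ and $\E[a_j(\mathcal{C})]=K-1$ for every $\mathcal{C}$, which gives at once the noise claim in (I) (Definition \ref{noise}) and the ``$=K-1$'' case of (II). For a relevant feature ($j\le p^*$) the means take the $K$ distinct values $\mu_1,\dots,\mu_K$ on the true clusters, so at $\mathcal{C}^*$ one has $\bar\mu_k^{(j)}=\mu_k$ and $B_j(\mathcal{C}^*)=\sum_k n^*_k(\mu_k-\bar\mu^{(j)})^2>0$ (the $\mu_k$ are distinct and $K\ge2$), with $n^*_k=|C^*_k|$; this is the strict inequality in (II). Exhibiting a single partition that places samples from two different true clusters together forces $B_j<B_j(\mathcal{C}^*)$, so the feature is not noise and (I) is complete.

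Part (III) is the substantive step. Because the noise features contribute the constant $K-1$ irrespective of $\mathcal{C}$, maximizing $\sum_j\E[a_j(\mathcal{C})]$ is equivalent to maximizing $\sum_{j\le p^*}B_j(\mathcal{C})$, and I would in fact establish the sharper per-feature claim (\ref{sigle_optimal}) first. The crux is an optimization lemma: among all partitions into $K$ nonempty clusters, $B_j(\mathcal{C})$ is maximized exactly at $\mathcal{C}^*$. Indeed, by the same total $=$ within $+$ between identity, $B_j(\mathcal{C})$ is bounded above by the partition-independent total dispersion of the means, with equality iff the within-cluster dispersion is zero, i.e.\ iff every candidate cluster is pure; since there are $K$ distinct mean values and exactly $K$ nonempty candidate clusters, purity forces a bijection between candidate and true clusters, so the maximizer is $\mathcal{C}^*$ up to relabeling. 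The bound and its equality condition are the same for every relevant feature, so $\mathcal{C}^*$ simultaneously maximizes each $\E[a_j(\mathcal{C})]$ with $j\le p^*$ and hence their sum with the constant noise terms, giving both (\ref{sigle_optimal}) and the optimality of $\mathcal{C}^*$ in the sense of Definition \ref{defoptimal}.

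The main obstacle I anticipate is the purity/bijection argument inside (III): one has to rule out partitions that split a true cluster while merging others and show that every such deviation strictly increases the within-cluster dispersion, hence strictly decreases $B_j$. Two modeling points should be fixed before the computation to make this rigorous --- that the expectation is over the feature noise with the cluster sizes $\tilde n_k$ and $n^*_k$ treated as constants, and that clusters are required to be nonempty, without which the bijection conclusion can fail.
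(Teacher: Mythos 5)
Your proposal is correct and follows essentially the same route as the paper: both reduce everything to the closed form $\E[a_j(\mathcal{C})]=(K-1)+n\bigl(\sum_k\tilde\pi_k\tilde\mu_k^2-(\sum_k\tilde\pi_k\tilde\mu_k)^2\bigr)$ (your $B_j(\mathcal{C})$ is exactly this between-cluster dispersion) and then invoke the strict convexity of $x^2$ / the ANOVA identity to get (I)--(III). The only difference is that you derive the closed form by expanding $\E[d_{ii'j}]$ directly rather than via the paper's Lemma on reformulating $a_j$, and you spell out the equality case of (III) (purity forcing a bijection between candidate and true clusters), which the paper dismisses as ``easily proved''; that added care is a genuine improvement in rigor but not a different method.
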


We present some comments on Theorem \ref{optimial_partition_theory} as follows.
\begin{itemize}
\item Theorem \ref{optimial_partition_theory} (I) shows the existence of noise features in a very general situation. Thus, the Definition \ref{defoptimal} and \ref{noise} make sense. We notice that by Definition \ref{noise}, noise features are those on which samples from any partitions have the same expectation. This characteristics of noise feature can be used to select features for high-dimensional data clustering problems.
This is a direct consequence of the new definition of the optimal partition (Definition \ref{defoptimal}) which cannot be resulted from the traditional formulation (\ref{risk}).

\item Theorem \ref{optimial_partition_theory} (II) reveals that the expectation of relevant features and noise features have a significant gap, which then underlies the distinguishability of the relevant features and noise features in applications. For example, the $\ell_1$-k-means proposed by Witten and Tibshirani~\cite{witten2010} works actually based on the use of such gap information. In fact, according to~\cite{witten2010}, given an estimated partition $\hat{\mathcal{C}}$, the $\ell_1$-k-means defines the optimal feature weight
\begin{eqnarray}\label{soft}
\hat{\bw}=\dfrac{S(a(\hat{\mathcal{C})},\Delta)}{\|S(a(\hat{\mathcal{C}}),\Delta)\|_2},
\end{eqnarray}
where $S(a,\Delta)_j=\max(a_j-\Delta,0)$ defined by soft thresholding function. From (\ref{soft}), it is clear that any feature corresponding to $a_j<\Delta$ has been identified as a noise feature, otherwise, a relevant feature. Since $\hat{\mathcal{C}}$ is considered as an approximation of the optimal partition $\mathcal{C}^*$, $a_j(\hat{\mathcal{C}})$ can be viewed as an approximation to $\mathbb{E}[a_j(\mathcal{C}^*)]$. It follows from (\ref{soft}) that the $\ell_1$-k-means performs feature selection actually by making use of (II) of Theorem \ref{optimial_partition_theory}. We will later show that the $\ell_0$-k-means algorithm we suggested in Algorithm \ref{alg-l0} also follows the same principle.

\item Theorem \ref{optimial_partition_theory} (III) indicates that in the very general case, $\mathcal{C}^*= \arg\max_C \mathbb{E}[a_j(\mathcal{C})]$ for all relevant features. This means the optimal partition is the partition that maximizes the BCSS values on each relevant features. Based on this, we have
\begin{eqnarray}
\max_{\mathcal{C}}\sum_{j=1}^p\mathbb{E}[a_j(\mathcal{C})]&=&\sum_{j=1}^p\mathbb{E}[a_j(\mathcal{C}^*)]\\\nonumber
&=&\sum_{j=1}^{p^*}\mathbb{E}[a_j(\mathcal{C}^*)]+(p-p^*)(K-1)\\\nonumber
&=&\sum_{j=1}^{p^*}\max_{\mathcal{C}}\mathbb{E}[a_j(\mathcal{C})]+(p-p^*)(K-1).
\end{eqnarray}
These equations are of special significance for the high-dimensional clustering because these equations reveal that the defined optimal partition $\mathcal{C}^*$ does not vary when dimension $p$ (number of features) varies with the number of samples $n$. This support that taking (\ref{optimialpartion}) as a definition of optimal partition rather than (\ref{risk}) is reasonable and it is consistent with our intuition.

\end{itemize}

The above expositions support that the new definitions on optimal partition and noise feature introduced in Definition 1 and 2 are of special significance when dealing with high-dimensional clustering problems. Based on these definitions, we will propose the $\ell_0$-k-means and analyze its theoretical properties below.

\subsection{A New Sparse Clustering Framework and $\ell_0$-k-means}
As a common practice, the $\ell_1$ penalty can be replaced by any $\ell_q (0\leq q<1)$ penalty in sparse modeling if a more sparse result hopes to be obtained~\cite{xuchang2012,marjanovic2012}. This is, however, by no means trivial and tractable for sparse clustering problems. For example, if we use $\ell_0$ penalty to replace the $\ell_1$ penalty in (\ref{l1framework}) which then leads to the following optimization problem:
\begin{eqnarray}\label{l0compare}
&\max\limits_{\bw, \Theta(\mathcal{C})}&\sum_{j=1}^pw_jf_j(\bX_j,\Theta(\mathcal{C}))\\\nonumber
&\text{s.t.}&\ \|\bw\|_{2}\leq 1,\ \|\bw\|_0\leq s, w_j\geq 0,\ \forall j.
\end{eqnarray}
This model is difficult to analyze and compute.

To overcome this difficulty, we propose in the present research to jointly apply the $\ell_{\infty}$ and $\ell_0$ penalty. In other words, we suggest to use the following new sparse clustering framework:
\begin{eqnarray}\label{l0framework}
&\max\limits_{\bw, \Theta(\mathcal{C})\in D}&\Big\{\sum_{j=1}^p w_jf_j(\bX_j,\Theta(\mathcal{C}))\Big\}\\\nonumber
&\text{s.t.}&\ \|\bw\|_{\infty}\leq 1,\ \|\bw\|_0\leq s,\ w_j\geq 0,\ \forall j,
\end{eqnarray}
where $ \|\bw\|_{\infty}=\max\limits_{i=1,2,\dots,p}|w_j|$ and $\|\bw\|_0$ is the number of nonzero components of $\bw$. We will show that, surprisingly, the new sparse clustering framework (\ref{l0framework}) is not only tractable, but can be analyzed theoretically as well.

The difficulty of solving the new sparse clustering framework (\ref{l0framework}) mainly comes from the existence of two different types of variables: the partition variable $\mathcal{C}=\{C_1,\dots,C_K\}$ featured by clustering the data set into $K$ groups, and the feature weight variable $\bw=(w_1,\dots,w_p)^\top$ that characterizes which features are responsible for the valid clustering. To tackle such difficulty, we suggest to apply the well-known alternative iteration technique. That is, we will solve (\ref{l0framework}) iteratively through two steps: First, fix $\bw$ and solve the problem (\ref{l0framework}) with respect to $\mathcal{C}$, and then, fix $\mathcal{C}$ and solve the problem with respect to $\bw$. This procedure is recursively repeated until a stopping criterion is satisfied. Thus, the sparse framework (\ref{l0framework}) can be formally solved by the procedure defined as the following:
\begin{itemize}
\item[(i)] Initialize $\bw^{0}=(w_1^0,\dots,w_p^0)=(1,\dots,1)^\top$ and $\bw^{1}=(w_1^1,\dots,w_p^1)=\frac{1}{\sqrt{p}}(1,\dots,1)^\top$. Let $t:=1$, for any $t\geq 0$ do the following steps (ii) and (iii) until
$$\frac{\sum_{j=1}^p|w_j^{t}-w_j^{t-1}|}{\sum_{j=1}^p|w_j^{t-1}|}<10^{-4}.$$
\item[(ii)] Let $f_j(\bX_j,\Theta(\mathcal{C}))\leftarrow w_j^{t}f_j(\bX_j,\Theta(\mathcal{C}))$, and then find the partition $\mathcal{C}^{t}$ by applying any clustering method (according to (\ref{framework})).
\item[(iii)] Let $f_j(\bX_j,\Theta(\mathcal{C}))\leftarrow f_j(\bX_j,\Theta(\mathcal{C}^{t}))$, solve the optimization problem
\begin{eqnarray}
&\max\limits_{\bw}& \sum_{j=1}^pw_jf_j(\bX_j,\Theta(\mathcal{C}^{t}))\\\nonumber
&s.t.& \|\bw\|_{\infty}\leq 1, {\|\bw\|_0\leq s}, w_j\geq 0,
\end{eqnarray}
to get $\bw^{t+1}$. Set $t:=t+1$.
\end{itemize}

In the above procedure, the step (ii) can be solved by any well-developed clustering algorithm as long as its formulation can be subsumed into the framework (\ref{framework}). Thus, the mainly computational complexity of the procedure comes from the step (iii). We will handle the step (iii) for a specific realization, that is the following $\ell_0$-k-means model.

Like $\ell_1$-k-means, we define a clustering model by specifying $f_j(\bX_j,\Theta(\mathcal{C}))$ in (\ref{l0framework}) to be the $a_j$ defined as in (\ref{aj}). Thus, the $\ell_0$-k-means we suggest is modeled as follows:
\begin{eqnarray}\label{l0kmeans}
&\max\limits_{\mathcal{C}, \bw}&\sum_{j=1}^pw_j\Big(\frac{1}{n}\sum_{i=1}^n\sum_{i'=1}^nd_{ii'j}-\sum_{k=1}^K\frac{1}{n_k}\sum_{i,i'\in C_k}d_{ii'j}\Big)\\\nonumber
&\text{s.t.}&\ \|\bw\|_{\infty}\leq 1,\ \|\bw\|_0\leq s,\ w_j\geq 0,\ \forall j.
\end{eqnarray}

In order to solve the $\ell_0$-k-means by the above procedure, we have to deal with the step (iii), that is
\begin{eqnarray}\label{l0subproblem}
&\max\limits_{\bw}& \bw^\top {\bf a}\\\nonumber
&s.t.& \|\bw\|_{\infty}\leq 1, {\|\bw\|_0\leq s}, w_j\geq 0.
\end{eqnarray}
We will prove the following Theorem \ref{theorem1} to solve (\ref{l0subproblem}). 

\begin{theorem}\label{theorem1}
When the sequence $\{a_j\}_{j=1}^p$ defined in (\ref{aj}) is decreasingly ordered and non-identical, i.e., $a_i\geq a_j$ for any $i<j$, an optimal solution of (\ref{l0subproblem}) is given by
\begin{equation}\label{l0solution}
{w}_j^*=\left\{\begin{array}{cc}
 1& j\leq \lfloor s \rfloor \\
  0&j> \lfloor s \rfloor
\end{array}\right.,
\end{equation}
where $ \lfloor s \rfloor$ means the integer part of s.
\end{theorem}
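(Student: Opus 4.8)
The plan is to read (\ref{l0subproblem}) as a box-constrained linear program with an additional cardinality restriction, and to exploit the structural fact that every coefficient $a_j$ is nonnegative. First I would rewrite the constraints: the condition $\|\bw\|_{\infty}\leq 1$ together with $w_j\geq 0$ is exactly $0\leq w_j\leq 1$ for every $j$, while $\|\bw\|_0\leq s$ forces the number of nonzero coordinates, being an integer, to be at most $\lfloor s\rfloor$. So the feasible region is $\{\bw:\ 0\leq w_j\leq 1,\ |\{j:w_j\neq 0\}|\leq \lfloor s\rfloor\}$. I would then record that each $a_j$ defined in (\ref{aj}) is a between-cluster sum of squares, i.e.\ the difference between the total dispersion and the within-cluster dispersion of feature $j$, and is therefore nonnegative; this nonnegativity is precisely the input that makes the closed form (\ref{l0solution}) correct, and I would flag it explicitly, since if some coefficient among the top $\lfloor s\rfloor$ were negative the stated $w^*$ would \emph{not} be optimal.

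Next I would establish the matching upper bound. For an arbitrary feasible $\bw$, nonnegativity of $a_j$ and $w_j\leq 1$ give the termwise bound $w_j a_j\leq a_j$. Writing $S=\{j:w_j\neq 0\}$, so that $|S|\leq \lfloor s\rfloor$, this yields
\begin{equation}
\bw^\top\ba=\sum_{j\in S}w_j a_j\leq \sum_{j\in S}a_j\leq \max_{|T|\leq \lfloor s\rfloor}\sum_{j\in T}a_j .
\end{equation}
Because the sequence is sorted decreasingly, the inner maximum over index sets of size at most $\lfloor s\rfloor$ is attained by $T=\{1,\dots,\lfloor s\rfloor\}$, so $\bw^\top\ba\leq \sum_{j=1}^{\lfloor s\rfloor}a_j$. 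This rearrangement step is the only place the ordering hypothesis is used, and it is the sole nonroutine ingredient: it follows from an elementary exchange argument, namely that if $T$ contains an index $j>\lfloor s\rfloor$ while omitting some $i\leq \lfloor s\rfloor$, replacing $j$ by $i$ does not decrease the sum since $a_i\geq a_j$.

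Finally I would check attainment and conclude. The candidate $w^*$ of (\ref{l0solution}) is feasible, since $\|w^*\|_{\infty}=1$, $\|w^*\|_0=\lfloor s\rfloor\leq s$, and $w_j^*\geq 0$, and it realizes the bound because $\sum_j w_j^* a_j=\sum_{j=1}^{\lfloor s\rfloor}a_j$; matching the upper bound with the value at $w^*$ proves optimality. I would remark that the ``non-identical'' hypothesis is not needed for $w^*$ to be \emph{an} optimal solution—it only affects uniqueness of the optimal support when ties occur at the threshold value $a_{\lfloor s\rfloor}$—so I would invoke it, if at all, only to pin down the maximizing set. I expect no genuine obstacle here: the entire content reduces to the nonnegativity of the $a_j$ plus the rearrangement inequality, and the one thing that must be gotten right is the reduction of the joint $\ell_{\infty}/\ell_0$ constraints to ``at most $\lfloor s\rfloor$ coordinates lying in $[0,1]$.''
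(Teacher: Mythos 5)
Your proof is correct and complete. The paper itself omits the argument entirely (``We take an omission for this relatively easy proof''), so there is no authorial proof to compare against; your reduction of the joint $\ell_{\infty}/\ell_0$ constraints to ``at most $\lfloor s\rfloor$ coordinates in $[0,1]$,'' the termwise bound $w_j a_j\leq a_j$ using $a_j\geq 0$, the exchange argument for the sorted top-$\lfloor s\rfloor$ sum, and the attainment check are exactly the intended routine verification. Your two side remarks are also accurate and worth making explicit: the nonnegativity of each $a_j$ (which follows from the BCSS decomposition, or from Cauchy--Schwarz applied to the reformulation in Lemma~\ref{ajreform}) is genuinely needed for the stated $w^*$ to be optimal, and the ``non-identical'' hypothesis plays no role in optimality itself, only in pinning down the maximizing support when ties occur at $a_{\lfloor s\rfloor}$.
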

Based on Theorem \ref{theorem1}, if we decreasingly order the $a_j'$s, then the solution of (\ref{l0subproblem}) can be directly set as (\ref{l0solution}), that is, we can directly assign $w_j=1$ for the components corresponding to the first $\lfloor s\rfloor$ elements of $\{a_j\}_{j=1}^p$ and $w_j=0$ otherwise. This procedure can be seen the $\ell_0$-k-means selects the relevant features by means of the gap information we discussed in Theorem \ref{optimial_partition_theory} (II). Note that the formulation (\ref{l0solution}) can be viewed as performing the {\it hard thresholding operation}~\cite{blumensath2008}, similar to the {\it soft}~\cite{witten2010} and {\it half}~\cite{xuchang2012} thresholding operations used in $\ell_1$-k-means and the $\ell_{1/2}$ regularization approach respectively. Finally, we suggest the following $\ell_0$-k-means algorithm for sparse clustering.
\begin{algorithm}[htb] 

\renewcommand{\algorithmicrequire}{\textbf{Input:}}

\renewcommand\algorithmicensure {\textbf{Output:} }

\caption{$\ell_0$-k-means algorithm} 

\label{alg-l0} 

\begin{algorithmic}[1] 

\REQUIRE ~~\\ 

Cluster number $K$ and data matrix $\bX$.\\

\ENSURE ~~\\ 

Clusters ${C}_1,{C}_2,\dots,{C}_K$ and $\bw^{new}$.\\
~\\
\STATE $w_1^{new}=w_2^{new}=\dots=w_p^{new}=\frac{1}{\sqrt{p}}$.\\
\STATE Let $\bw^{old}=\bw^{new}$. Transform $d_{ii'j}\leftarrow w_j^{old}d_{ii'j}$. Find clusters ${C}_1,{C}_2,\dots,{C}_K$ based on standard k-means.\\

\STATE Fix ${C}_1,{C}_2,\dots,{C}_K$. Calculate $a_j=\frac{1}{n}\sum_{i,i'}^nd_{ii'j}-\sum_{k=1}^K\frac{1}{n_k}\sum_{i,i'\in C_k}d_{ii'j}$. Order the $a_j'$s decreasingly, then assign $w_j=1$ for the components corresponding to the top $\lfloor s\rfloor$ elements of $\{a_j\}_{j=1}^p$ and $w_j=0$ otherwise.

\STATE Repeat step 2 and 3 until
$$\frac{\sum_{j=1}^p|w_j^{new}-w_j^{old}|}{\sum_{j=1}^p|w_j^{old}|}<10^{-4}.$$
\end{algorithmic}

\end{algorithm}

Observe that the standard k-means costs $O(nKp)$ time in complexity, while the step 3 of Algorithm \ref{alg-l0} costs $O(p\lfloor s\rfloor )$ in time, so the suggested $\ell_0$-k-means algorithm is an $O(nKp)$ ( if $\lfloor s\rfloor\leq nK$) complexity method which is the same as the standard k-means. The condition $\lfloor s\rfloor\leq nK$ is reasonable, because it is often assumed the number of the relevant features in high-dimensional data clustering problems is only a small portion of features. Therefore, the $\ell_0$-k-means should be very efficient in implementation. This is supported in the simulations of Section \ref{simulations} below.

\subsection{Theoretical Analysis of $\ell_0$-k-means}\label{consistensection}

In this subsection, we assess the theoretical properties of the proposed $\ell_0$-k-means. The main conclusion is that under mild conditions, the solution of $\ell_0$-k-means algorithm has a feature selection consistency property if the data matrix is generated from a high-dimensional Gaussian mixture model, namely, $\mathcal{F}_k(\mu_k,\Sigma_k)=\mathcal{N}_k(\mu_k,\Sigma_k)$ in (\ref{step1}) and (\ref{step2}).

Now let us consider the consistency of $\ell_0$-k-means. Since there are two main steps in Algorithm \ref{alg-l0}, we need to consider each step separately. We notice that in the first step, the algorithm seeks for partition $\hat{\mathcal{C}}$ via maximizing BCSS, while in the second step, it selects relevant features based on $\hat{\mathcal{C}}$. For the partition step, we can prove the following result.
\begin{theorem}\label{theorem2}
(Partition Consistency) Suppose the data matrix $\bX\in\mathbb{R}^{n\times p}$ is generated from the Gasussian mixture model by (\ref{step1}) and (\ref{step2}), $\mathcal{C}^*$ is the optimal partition of $\bX$ and $\hat{\mathcal{C}}\in\arg\max\limits_{\mathcal{C}}\sum_j a_j(\mathcal{C})$. Then
\begin{eqnarray}\label{probability_partition}
\mathbb{P}(ECR(\hat{\mathcal{C}})\geq F(p^*)|\mathcal{C}^*)\leq 2K^{-n},
\end{eqnarray}
if $p\leq p^*n,p^*\geq\kappa\triangleq 128\frac{K+\sum_k\pi_k\mu_k^2}{(\sum_k\pi_k\mu_k^2-(\sum_k\pi_k\mu_k)^2)^2},$ where $F(\cdot)$ is a decreasing function such that $F(\kappa)=1-\max_k\pi_k,F(+\infty)=0$.

\end{theorem}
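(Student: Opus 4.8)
The plan is to convert the optimality of $\hat{\mathcal{C}}$ into a comparison inequality and then play the linearly growing signal carried by the relevant features against the bounded fluctuations carried by the noise features. Since $\hat{\mathcal{C}}\in\arg\max_{\mathcal{C}}\sum_j a_j(\mathcal{C})$, in particular $\sum_j a_j(\hat{\mathcal{C}})\ge\sum_j a_j(\mathcal{C}^*)$, which after splitting the index set into the relevant block $\{1,\dots,p^*\}$ and the noise block $\{p^*+1,\dots,p\}$ rearranges to
\[
\sum_{j=1}^{p^*}\big(a_j(\mathcal{C}^*)-a_j(\hat{\mathcal{C}})\big)\ \le\ \sum_{j=p^*+1}^{p}\big(a_j(\hat{\mathcal{C}})-a_j(\mathcal{C}^*)\big).
\]
The whole argument consists in showing that the left-hand side is necessarily large (of order $n\,p^*$) whenever $ECR(\hat{\mathcal{C}})$ is non-negligible, while the right-hand side is, with overwhelming probability, too small to match it.

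First I would compute, for a relevant feature $j\le p^*$ and any partition $\mathcal{C}$ with confusion matrix $\boldsymbol{\pi}=(\pi_{kk'})$ against $\mathcal{C}^*$, the expectation of $a_j$. A direct one-way ANOVA decomposition gives $\mathbb{E}[a_j(\mathcal{C})]=(K-1)+n\,\Phi(\boldsymbol{\pi})$, where $\Phi(\boldsymbol{\pi})=\sum_{k'}\tilde{\pi}_{k'}(\nu_{k'}-\bar\mu)^2$ is the between-cluster variance of the induced means $\nu_{k'}=\sum_k(\pi_{kk'}/\tilde{\pi}_{k'})\mu_k$, with $\bar\mu=\sum_k\pi_k\mu_k$; for noise features $\Phi\equiv0$, consistent with Theorem~\ref{optimial_partition_theory}(II). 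The crucial structural fact, a variance-decomposition/convexity argument, is that $\Phi$ is maximised exactly at permutation matrices, with maximal value $V:=\sum_k\pi_k\mu_k^2-(\sum_k\pi_k\mu_k)^2$, recovering Theorem~\ref{optimial_partition_theory}(III) and identifying the gap $V$ appearing in $\kappa$. I would then prove a quantitative, monotone lower bound tying the per-feature deficit to the error rate, $V-\Phi(\boldsymbol{\pi})\ge\psi\big(ECR(\mathcal{C})\big)$ with $\psi$ increasing and $\psi(0)=0$, so that $\sum_{j\le p^*}\mathbb{E}\big[a_j(\mathcal{C}^*)-a_j(\mathcal{C})\big]\ge n\,p^*\,\psi(ECR(\mathcal{C}))$. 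Note the signal grows linearly in both $n$ and $p^*$, which is what ultimately defeats the noise.

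Next I would control both blocks uniformly over partitions. For a fixed $\mathcal{C}$ each $a_j$ is a (non-central) chi-square-type quadratic form in the Gaussian column $\bX_j$: central with $O(1)$ mean and $O(K)$ variance for a noise feature, and with non-centrality of order $n\,\Phi$ for a relevant feature. I would apply a Bernstein/Laurent--Massart sub-exponential tail to the block sums $\sum_{j\le p^*}\big(a_j(\mathcal{C})-\mathbb{E}[a_j(\mathcal{C})]\big)$ and $\sum_{j>p^*}\big(a_j(\mathcal{C})-(K-1)\big)$, then take a union bound over the at most $K^n$ partitions (more precisely over those with $ECR\ge F(p^*)$). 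Because the signal deficit to be overcome is of order $n$, the tolerated deviation is also of order $n$, whose large-deviation probability is $e^{-\Theta(n)}$; choosing the numerical constant $128$ in $\kappa$ large enough makes this exponent dominate the entropy $\sim n\log K$ of the partition set, producing the stated bound $2K^{-n}$. The hypotheses enter exactly here: $p\le p^*n$ guarantees that the noise-block fluctuation ($\sim\sqrt{p}$) stays below the signal scale ($\sim n\sqrt{p^*}$), and $p^*\ge\kappa=128\,(K+\sum_k\pi_k\mu_k^2)/V^2$ guarantees that the per-feature energy $K+\sum_k\pi_k\mu_k^2$ (the sub-exponential parameter of the $a_j$) is small enough relative to the squared gap $V^2$ for the exponent to clear the union bound.

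Finally I would assemble the pieces: on the complement of the two bad concentration events, the comparison inequality forces $n\,p^*\,\psi(ECR(\hat{\mathcal{C}}))$ to be dominated by the fluctuation budget, which after inserting $p\le p^*n$ is of order $n\sqrt{p^*}$ times the square-rooted energy; solving $\psi(ECR(\hat{\mathcal{C}}))\lesssim\sqrt{(K+\sum_k\pi_k\mu_k^2)/(p^*V^2)}$ for the error rate defines a decreasing threshold $F(p^*)$, and one checks directly that $F(\kappa)=1-\max_k\pi_k$ (the weakest, ``better than trivial'' guarantee at the smallest admissible $p^*$) and $F(+\infty)=0$. The main obstacle I anticipate is precisely the uniform concentration step: since $\hat{\mathcal{C}}$ is data-dependent one cannot fix the partition, so the chi-square deviation bounds must survive a union bound over exponentially many ($K^n$) partitions while still leaving room for the signal. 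Making the sub-exponential parameters, the $n$-scaling of the non-central terms, and the numerical constants line up exactly with the stated $\kappa$ and with $p\le p^*n$ is the delicate part, and it is also where the deficit bound $\psi$ must be rendered genuinely quantitative rather than merely strict.
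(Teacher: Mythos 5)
Your proposal is correct and follows essentially the same route as the paper's proof: an expectation gap of order $n p^*$ quantified by an optimization over confusion matrices with $ECR\geq\epsilon$ (the paper's $f(\epsilon)$ in Lemma~\ref{ecrlimit} plays the role of your $\psi$), sub-exponential tail bounds for the non-central chi-square forms of Lemma~\ref{ajreform} and~\ref{tailchi}, a union bound over the $K^n$ partitions, and the conditions $p\leq p^*n$ and $p^*\geq\kappa$ entering exactly where you place them. The only cosmetic difference is that the paper does not split the relevant and noise blocks but concentrates the whole sums $I_1=BCSS(\tilde{\mathcal{C}})-y$ and $I_2=BCSS(\mathcal{C}^*)-y$ directly, with the threshold $t_1=t_2=(\mathbb{E}[I_2]-\mathbb{E}[I_1])/2$ replacing your comparison inequality.
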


From Theorem \ref{theorem2}, we can conclude that $ECR(\hat{\mathcal{C}}|\mathcal{C}^*)\stackrel{\mathbb{P}}{\rightarrow}0$ if  $p^*\rightarrow \infty$, $p\leq p^*n$ and $n\rightarrow\infty$. {This first condition is necessary because even if we know $\mu_k$ and all relevant features, it is still needed to have partition consistency.} This second condition ($p\leq p^*n$), however, might not be necessary. But it is at least necessary for $p$ to satisfy $p=O(p^{*}n^2)$, which shows that the conditions cannot be relaxed too much whenever possible. The case $p=O(p^{*}n^2)$ can be obtained by considering the possibility $\mathbb{P}(ECR(\mathcal{C})>\epsilon|\mathcal{C}^*)$ for any arbitrary partition $\mathcal{C}$. Whenever this condition is violated, we can construct a parameter settings $p,p^*$ and $n$ such that $\mathbb{P}(ECR(\mathcal{C})>\epsilon|\mathcal{C}^*)\not\rightarrow 0$. That is to say, the estimation (\ref{probability_partition}) is optimal in certain sense.

For the relevant feature selection step, we will prove the following Theorem \ref{theorem3}.

\begin{theorem}\label{theorem3}
In the setting of Theorem \ref{theorem2}, if $p=o(\exp\{\rho n\})$ ($\rho=\frac{\sum_{k}\pi_{k}\mu_k^2}{258}$ is a constant), and $p\geq M$ ($M$ is a constant depends on $\mu_k,\pi_k, k=1,\dots,K$), then
\begin{eqnarray}\label{probabillity_featuregap}
\mathbb{P}(\min_{j\leq p^*}a_j(\hat{\mathcal{C}})>\max_{p^*< j\leq p}a_j(\hat{\mathcal{C}})|\mathcal{C}^*)\rightarrow 1\ \text{as }n\rightarrow\infty
\end{eqnarray}
where $a_j$ is the BCSS of $j^{th}$ feature defined as in (\ref{aj}).
\end{theorem}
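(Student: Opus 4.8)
The plan is to establish the separation of the BCSS values for relevant versus noise features under the estimated partition $\hat{\mathcal{C}}$, leveraging the partition consistency already guaranteed by Theorem \ref{theorem2}. First I would condition on the high-probability event from Theorem \ref{theorem2} that $ECR(\hat{\mathcal{C}})$ is small, i.e. $\hat{\mathcal{C}}$ closely agrees with $\mathcal{C}^*$ up to a vanishing fraction of misclassified points. On this event, the empirical proportions $\pi_{kk'}$ are close to a permutation of the identity, so that the estimated clusters $\tilde{C}_{k'}$ differ from the true clusters $C^*_k$ only on a negligible set. The key observation is that $a_j(\hat{\mathcal{C}})$ is a continuous function of the partition through the proportions, so near-consistency of the partition transfers the gap structure established in Theorem \ref{optimial_partition_theory}(II) from $\mathcal{C}^*$ to $\hat{\mathcal{C}}$.

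Next I would analyze the two families of features separately using concentration. For a noise feature ($p^* < j \leq p$), the underlying $x_{ij}$ are i.i.d.\ standard Gaussian regardless of cluster membership, so $a_j(\hat{\mathcal{C}})$ concentrates tightly around its mean $K-1$; a standard $\chi^2$-type tail bound (or a Bernstein/Gaussian-chaos inequality, since $a_j$ is a quadratic form in Gaussians) controls $\max_{p^*<j\leq p} a_j(\hat{\mathcal{C}})$. Because there are at most $p$ noise features, I would apply a union bound, and this is precisely where the growth restriction $p=o(\exp\{\rho n\})$ enters: the sub-exponential tail of each $a_j$ must beat the number of features, forcing the deviation threshold to scale like $\sqrt{(\log p)/n}$, which is $o(1)$ exactly under the stated condition on $p$. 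For a relevant feature ($1\leq j\leq p^*$), I would show $a_j(\hat{\mathcal{C}})$ concentrates around a value strictly exceeding $K-1$ by a constant gap determined by $\sum_k \pi_k \mu_k^2 - (\sum_k \pi_k \mu_k)^2$, the same between-mean dispersion appearing in $\kappa$ and $\rho$; here the condition $p\geq M$ ensures the sample is large enough for the minimum over the $p^*$ relevant features to stay above the threshold.

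The final step is to combine these two concentration statements: with probability tending to one, every relevant feature has $a_j(\hat{\mathcal{C}})$ above $(K-1)+\delta$ for some fixed $\delta>0$, while every noise feature has $a_j(\hat{\mathcal{C}})$ below $(K-1)+\delta/2$, which yields the desired strict inequality $\min_{j\leq p^*} a_j(\hat{\mathcal{C}}) > \max_{p^*<j\leq p} a_j(\hat{\mathcal{C}})$. I expect the main obstacle to be handling the dependence of $a_j(\hat{\mathcal{C}})$ on the \emph{estimated} partition rather than the fixed true one: since $\hat{\mathcal{C}}$ is itself random and data-dependent, the quantities $a_j(\hat{\mathcal{C}})$ are not simple quadratic forms with fixed coefficients, so the clean Gaussian concentration does not apply directly. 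The way to overcome this is to either (a) union-bound over all partitions within small ECR of $\mathcal{C}^*$ and control each via fixed-partition concentration, absorbing the combinatorial count into the exponential budget, or (b) show that the perturbation in $a_j$ caused by moving from $\mathcal{C}^*$ to $\hat{\mathcal{C}}$ is uniformly bounded by a quantity proportional to $ECR(\hat{\mathcal{C}})$, which Theorem \ref{theorem2} already drives to zero. Reconciling this perturbation bound with the exponential feature count in the union bound is the delicate quantitative heart of the argument.
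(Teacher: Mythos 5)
Your plan, in its option (a), is essentially the paper's proof: the paper also splits into relevant and noise features, applies the sub-exponential tail bound for noncentral chi-square forms (its Lemma 3) with a union bound over the $p^*$ and $p-p^*$ features respectively, and handles the data-dependence of $\hat{\mathcal{C}}$ exactly as you propose, by union-bounding over all partitions with $ECR<\epsilon$, absorbing the count ${n\choose\lfloor\epsilon n\rfloor}(K-1)^{\epsilon n}$ via Stirling into the exponential budget $\exp\{-n\sum_k\pi_k\mu_k^2/257\}$ and adding the $2K^{-n}$ term from Theorem \ref{theorem2}. The one quantitative correction: the separating threshold cannot be a fixed constant $(K-1)+\delta$, since $\max_{j>p^*}a_j(\hat{\mathcal{C}})$ grows like $\log p$, which may diverge under $p=o(\exp\{\rho n\})$; the paper instead separates at $K+\frac{1}{4}n\sum_k\pi_k\mu_k^2$, which works because the relevant features' BCSS grows linearly in $n$ while $\log p=o(n)$.
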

The estimation (\ref{probabillity_featuregap}) in Theorem \ref{theorem3} shows essentially that the gap between the relevant features and noise feature (see (II) of Theorem \ref{optimial_partition_theory}) will be kept probability. While, Theorem \ref{theorem2} shows $ECR(\hat{\mathcal{C}})$ less than or equal to any small positive constant with high probability. Thus, combing the conditions of Theorem \ref{theorem2} and \ref{theorem3}, we can establish the feature selection consistency property of the solution of $\ell_0$-k-means algorithm. The details are presented as follows.

\begin{theorem}(Feature Selection Consistency)\label{consistent}
Suppose the data matrix $\bX\in\mathbb{R}^{n\times p}$ is generated from the Gasussian mixture model by (\ref{step1}) and (\ref{step2}) with the properties
$p\leq p^*n, p^*\geq\kappa$ and $p=o(\exp\{\rho n\})$  ($\rho=\frac{\sum_{k}\pi_{k}\mu_k^2}{258}$), and $\bw^*$ is the solution of Algorithm \ref{alg-l0}. Then
$$\mathbb{P}(w^*_j=1,j\leq p^*\ and\ w^*_j=0, j>p^* |\mathcal{C}^*)\rightarrow 1\ \text{as }n\rightarrow\infty.$$
\end{theorem}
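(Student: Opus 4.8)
The plan is to derive feature-selection consistency directly by combining the hard-thresholding characterization of the weight step (Theorem \ref{theorem1}) with the probabilistic gap guarantee (Theorem \ref{theorem3}), using the partition consistency of Theorem \ref{theorem2} as the underlying ingredient that makes the gap estimate available. The combination is short once the two hard theorems are in hand; the real work has already been done in proving them.

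First I would fix notation for the algorithm's fixed point: let $\hat{\mathcal{C}}$ denote the partition at convergence of Algorithm \ref{alg-l0} and let $\bw^*$ be the weight vector produced by step 3. By Theorem \ref{theorem1}, once $\hat{\mathcal{C}}$ is fixed, step 3 orders the values $\{a_j(\hat{\mathcal{C}})\}_{j=1}^p$ decreasingly and sets $w_j^*=1$ on the indices attaining the top $\lfloor s\rfloor$ values and $w_j^*=0$ otherwise, i.e. exactly the hard-thresholding solution of (\ref{l0subproblem}). I would take the tuning parameter to be calibrated to the true number of relevant features, $\lfloor s\rfloor=p^*$, stated as the operating assumption under which exact recovery can be expected.

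Next, define the target event $A=\{w_j^*=1,\ \forall j\le p^*;\ w_j^*=0,\ \forall j>p^*\}$ and the gap event $G=\{\min_{j\le p^*}a_j(\hat{\mathcal{C}})>\max_{p^*<j\le p}a_j(\hat{\mathcal{C}})\}$. The key deterministic observation is the inclusion $G\subseteq A$: on $G$ every relevant feature has a strictly larger BCSS value than every noise feature, so the decreasing ordering in step 3 places the indices $1,\dots,p^*$ strictly before all of $p^*+1,\dots,p$; selecting the top $\lfloor s\rfloor=p^*$ of them returns exactly the relevant set, and the strict inequality removes any ambiguity from ties. Hence $\mathbb{P}(A\mid\mathcal{C}^*)\ge\mathbb{P}(G\mid\mathcal{C}^*)$. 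Under the hypotheses $p\le p^*n$, $p^*\ge\kappa$, and $p=o(\exp\{\rho n\})$ (with $p\ge M$ for $n$ large), the assumptions of both Theorem \ref{theorem2} and Theorem \ref{theorem3} are met, so $\mathbb{P}(G\mid\mathcal{C}^*)\to1$ as $n\to\infty$, and the claim follows.

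The main obstacle is not this combination but ensuring that the partition $\hat{\mathcal{C}}$ entering the gap event is genuinely the one for which Theorem \ref{theorem3} was proved. Here I would invoke Theorem \ref{theorem2} to guarantee that the partition step converges to a partition with $ECR(\hat{\mathcal{C}})\to0$, so that $\hat{\mathcal{C}}$ behaves like the optimal partition $\mathcal{C}^*$ and the $a_j(\hat{\mathcal{C}})$ concentrate near $\mathbb{E}[a_j(\mathcal{C}^*)]$, whose relevant/noise separation was quantified in Theorem \ref{optimial_partition_theory} (II). The delicate point worth spelling out is the alternating nature of the algorithm: the weights feed back into the partition step, so one should check that at the fixed point the weight vector $\bw^*$, concentrated on relevant features, does not degrade the partition, i.e. that the fixed point is self-consistent with the premises of Theorems \ref{theorem2} and \ref{theorem3}. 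Once this self-consistency is established, a union bound over the two stages' failure probabilities (at most $2K^{-n}$ from the partition stage plus an $o(1)$ term from the gap stage) closes the argument.
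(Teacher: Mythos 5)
Your proposal is correct and follows essentially the route the paper intends: the paper gives no standalone proof of Theorem \ref{consistent}, treating it as the immediate combination of the hard-thresholding rule from Theorem \ref{theorem1} with the gap event of Theorem \ref{theorem3} (itself resting on Theorem \ref{theorem2}), which is exactly your inclusion $G\subseteq A$ under the calibration $\lfloor s\rfloor=p^*$. If anything, you are more careful than the paper, since you make explicit both the implicit assumption that $s$ is tuned to $p^*$ and the unaddressed question of whether the alternating algorithm's fixed-point partition coincides with the BCSS maximizer assumed in Theorem \ref{theorem2}.
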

Let us make some remarks on Theorem \ref{consistent} as follows.
\begin{itemize}
\item If $p^*$ is fixed (i.e., the number of relevant features is fixed), Theorem \ref{consistent} holds if $p$ and $p^*$ satisfies the relation $p\leq p^*n$, or equivalently saying, $p$ grows at the same order of $n$.

\item If $p^*$ is not fixed but varies proportional to $p$, the conditions of Theorem \ref{consistent} degenerate to be $p=o(\exp\{\rho n\})$, i.e., the number of features grows slower than the exponential growth of the sample size. We notice that such condition is optimal when an ultra-high dimensional feature selection problem is dealt with in the penalized regression approach (see, e.g., ~\cite{zhao2006model,wainwright2009sharp,fan2010}).

\item Theorem \ref{consistent} assumes the data matrix $\bX$ is generated from a Gaussian mixture model. Actually, this condition can be generalized to any kind of subgaussian distributions.
\end{itemize}

From Theorem \ref{consistent}, we conclude that under suitable conditions the solution of $\ell_0$-k-means algorithm defined by Algorithm \ref{alg-l0} has the feature selection consistency property.
In particular, no feature selection consistency result has been justified for the $\ell_1$-k-means. This reveals a difference and potential advantage of the new suggested $\ell_0$-k-means for high-dimensional data clustering problems.

\section{Experimental Evaluation}\label{simulations}
In this section, we evaluate and compare the performance of the $\ell_0$-k-means, $\ell_1$-k-means and standard k-means based on a set of synthetic data and a concrete Allen Developing Mouse Brain Atlas data set.

The $\ell_0$-k-means and $\ell_1$-k-means algorithms involve a tunning parameter $s$, controlling the sparsity of the features selected. Witten and Tibshirani~\cite{witten2010} has conducted a strategy to select the best tunning parameter $s$ successfully based on Gap Statistics~\cite{tibshirani2001}. Thus, we employ this same strategy for the proposed $\ell_0$-k-means as well. Four different criteria are taken for a more comprehensive comparison for all the algorithms. The first criterion is the {\it Classification Error Rate} (CER) used in~\cite{witten2010,chipman2006}, which was used instead of the Error Clustering Rate (ECR) adopted early in order to make the new algorithm ($\ell_0$-k-means) directly comparable with the results offered in Witten and Tibshirani's original paper~\cite{witten2010}. CER is defined as  $ CER\triangleq \sum_{i>i'}|1_{\hat{\mathcal{C}}(i,i')}-1_{\mathcal{C}^*(i,i')}|/{n\choose 2}$, where $1_{{\mathcal{C}}(i,i')}$ is the indicator function if the $i^{th}$ and $j^{th}$ samples are in the same group with respect to partition $\mathcal{C}$.
The second criterion is the number of {\it non-zero weights} NW=${|\{i:\hat{w}_i\neq 0\}|}$, where $\hat{\bw}$ is any estimation of $\bw$ yielded by each compared algorithm. It measures how many features are selected as relevant features by each algorithm. The third criterion is the number of {\it proper zero weights} PZW=${|\{i:w_i=0,\hat{w}_i=0\}|}$, which measures how many noise features are correctly eliminated by an algorithm. The fourth criterion is the number of {\it proper nonzero weights} PNW=${|\{i:w_i\neq 0,\hat{w}_i\neq 0\}|}$, which measures how many relevant features are correctly selected by an algorithm. Note that PZW and PNW together measure the capability of an algorithm that correctly include the relevant features and exclude the noise features, while CER measures the mistaken classification rate. These criteria can fairly characterize the performance of each compared algorithms.

\subsection{Evaluation on Synthetic Data}
We conducted four sets of experiments to evaluate the performance of each algorithm. The first experiment was to verify that Gap Statistics can be used to select an appropriate tunning parameter for the $\ell_0$-k-means. The second experiment was designed to detailedly compare the performance of $\ell_0$-k-means, $\ell_1$-k-means and standard k-means. The third experiment then compared the $\ell_0$-k-means with several other related well-known clustering methods. In these three experiments, the respective algorithms were all implemented in the circumstance that all the features are uncorrelated. In the fourth experiment, however, we compared the algorithms under the circumstance that some features are correlated. This was designed to assess the influence of correlation among that features to the performance of the algorithms.

{\it Experiment 1:}
We suppose there exists 6 clusters and each contains 20 samples in data matrix $\bX_{120\times 2000}$. There are 2000 features among which the first 200 features are relevant features. For the $k^{th}$ cluster, relevant features are sampled from $\mathcal{N}(0.5\cdot k,1)$ and noise features are sampled from $\mathcal{N}(0,1)$ independently. The data matrix is normalized before using any algorithm and the experiment is conducted 20 times for $\ell_0$-k-means and standard k-means. All the results are shown in Figure \ref{Gaps}.

Figure \ref{Gaps} summarizes all the results of $\ell_0$-k-means compared with standard k-means. From the first plot, we can see that the best tunning parameter of $\ell_0$-k-means has been selected by Gap Statistics (because the value of horizontal axis which is corresponding to maximal Gap Statistics is around 200). The middle one shows that the obtained partition has a significant smaller CER compared with standard k-means. The third plot plots the averaged estimated weights for all features. From that, we can find that the averaged estimated weights for relevant features are generally close to 1 while the estimated weights for noise features are close to 0. This shows that by using Gap Statistics the $\ell_0$-k-means does have good feature selection capacity and thus gives more accurate partitions.

\begin{figure}[htp]
  \centering
  \includegraphics[width=.3\textwidth]{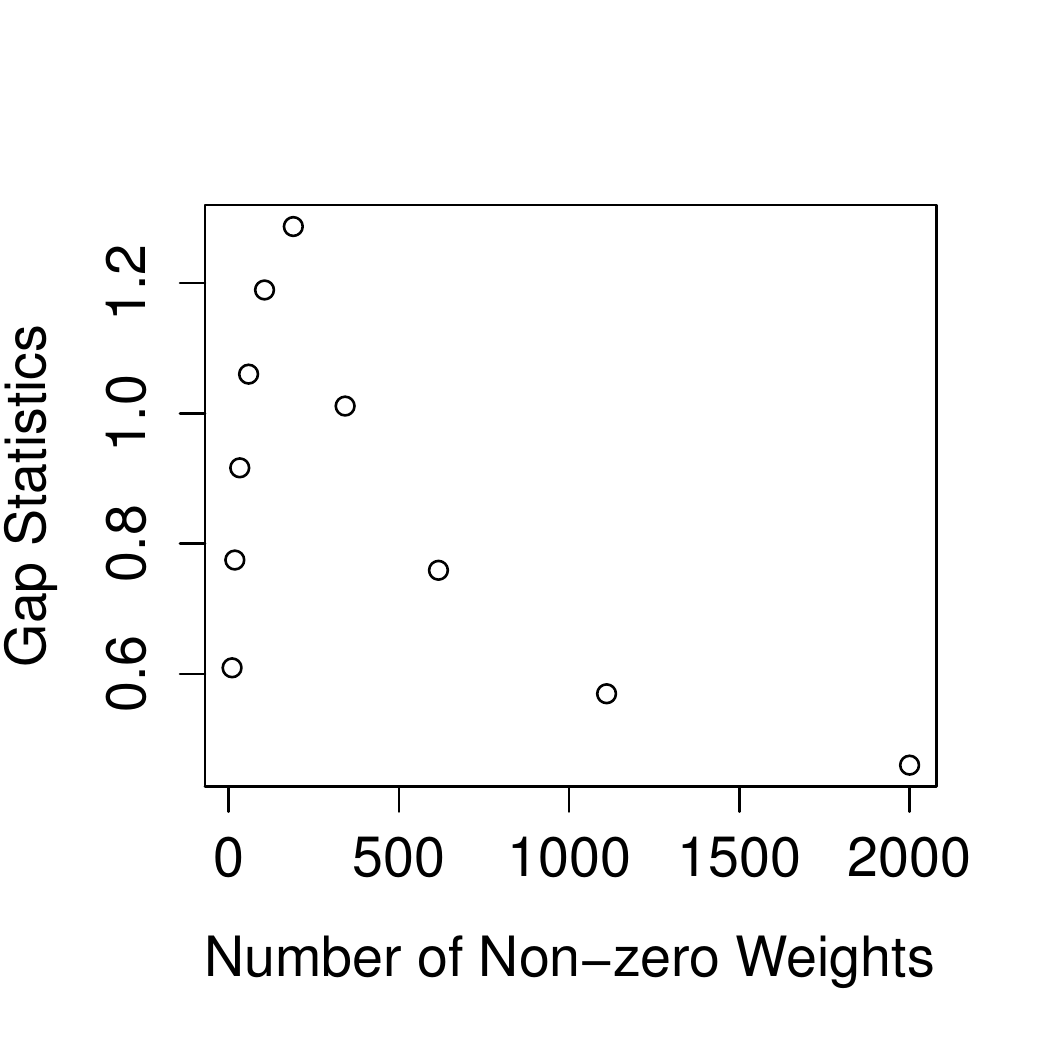}
  \includegraphics[width=.3\textwidth]{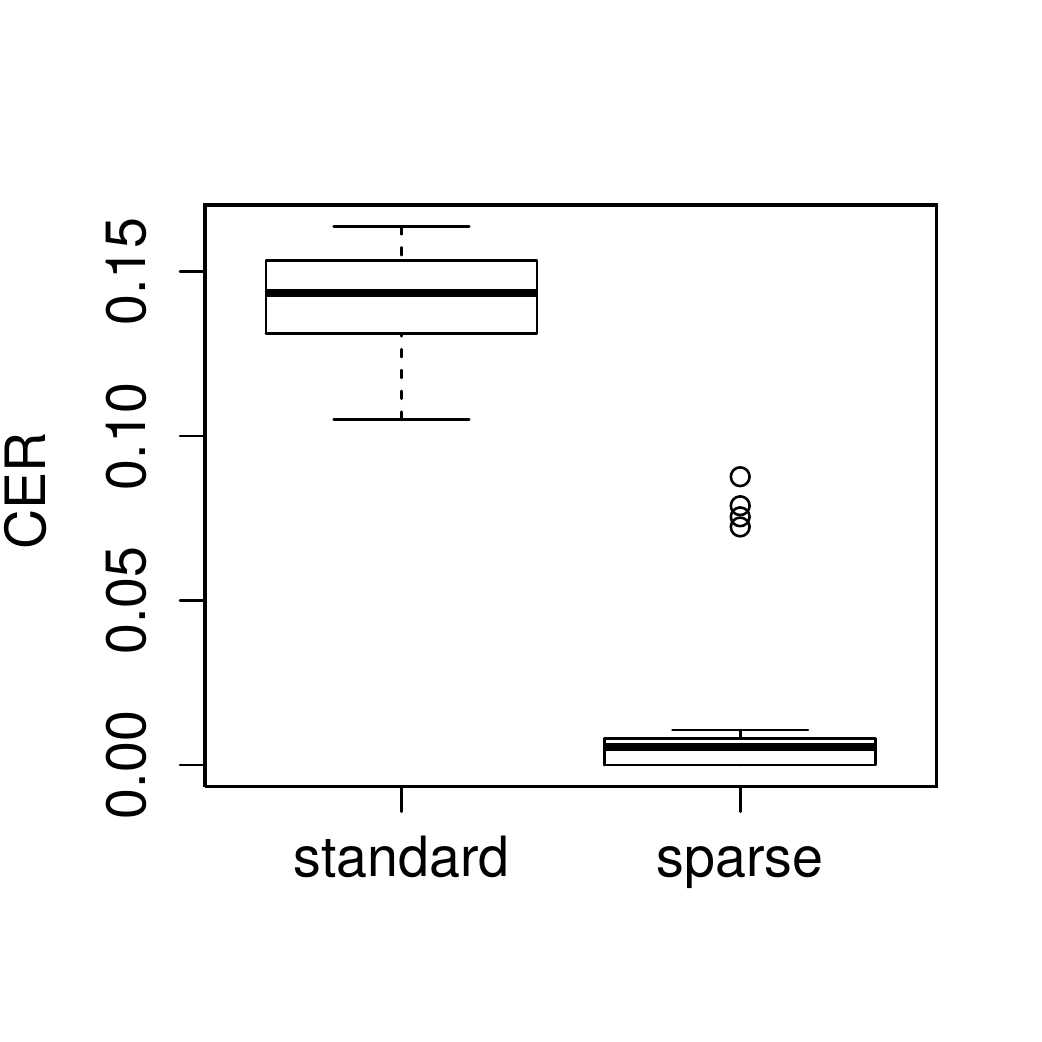}
  \includegraphics[width=.3\textwidth]{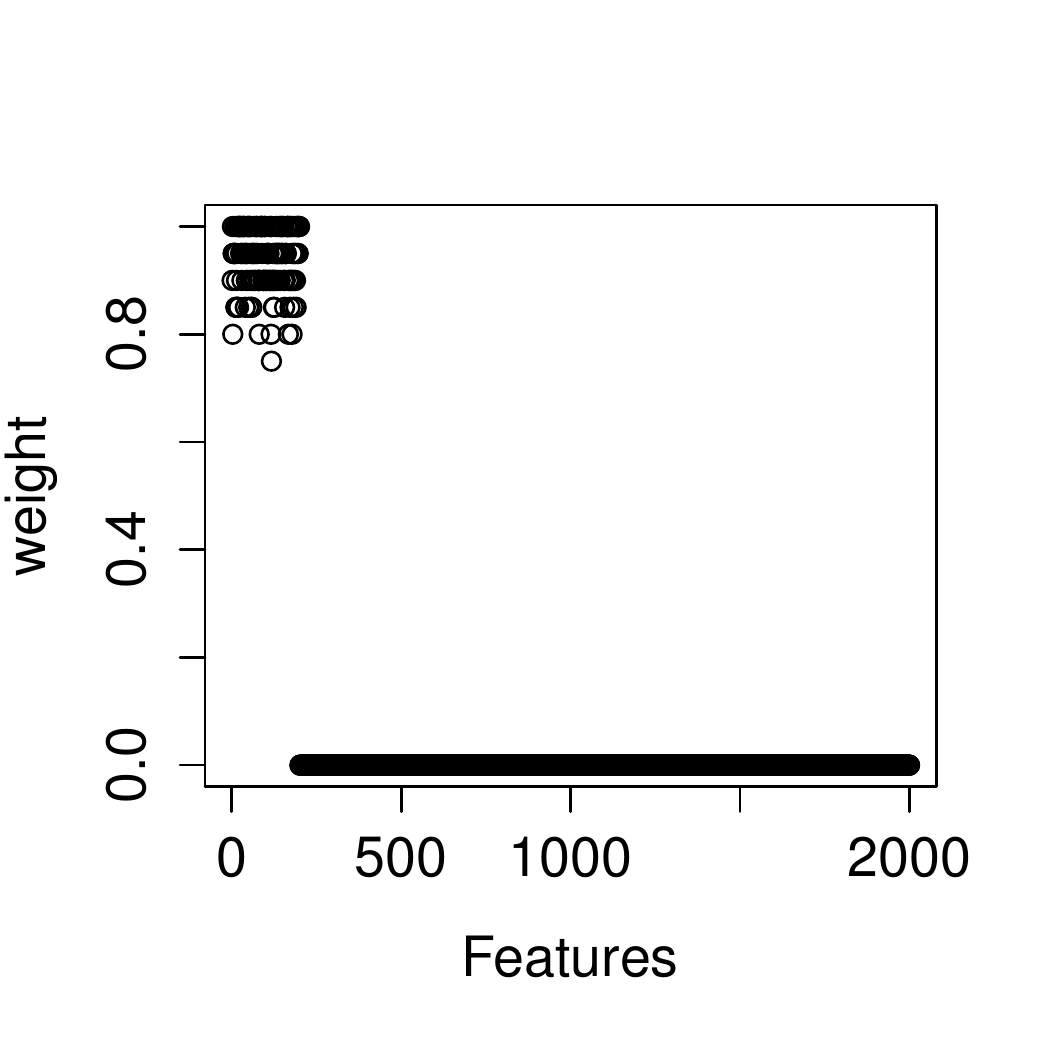}
  \caption{Overview of $\ell_0$-k-means.}\label{Gaps}
\end{figure}

{\it Experiment 2:} We evaluate standard k-means, $\ell_0$-k-means and $\ell_1$-k-means by varying the number of feature $p$. We took the data matrix $\bX_{60\times p}$ from the data generating processing (\ref{step1}) and (\ref{step2}) with respect to $K=3$. For (\ref{step2}), we assume elements $x_{ij}\sim\mathcal{N}(\mu_{ij},1)$ in $\bX$ independently, where
\begin{equation}
\mu_{ij} = \left\{\begin{array}{cc}
  \mu & \text{if}\ i\in C_1,j\leq 50 \\
  -\mu& \text{if}\ i\in C_2,j\leq 50\\
  0& \text{if}\ i\in C_3,\ \text{or}\ j>50
\end{array}\right..
\end{equation}
Then, the first $50$ features are relevant features while the others are noise features according to Theorem \ref{optimial_partition_theory}. We suppose each cluster contains 20 samples and set $\mu=0.6,0.7$, $p=200,500,1000$ and each simulation is repeated 20 times. The averaged experimental results are shown in Figure \ref{CERS1} and Table \ref{WS1}.

\begin{table}[ht]
  \centering
  \caption{Mean values (PZW, PNW) for different $\mu$ in experiment 2.}\label{WS1}
    \begin{tabular}{rrrr}
    \toprule
    \multicolumn{4}{c}{$\mu=0.6$} \\
    \midrule
          & k-means & $\ell_1$-k-means & $\ell_0$-k-means \\
    p=200 & (0, 50) & (1.6, 50)  & (138.9, 33.8)  \\
        p=500 & (0, 50) & (214.4, 49)  & (440.1, 30.8) \\
    p=1000 & (0, 50) & (618.1, 42.6)  & (941.6, 32.8) \\
    \midrule
    \multicolumn{4}{c}{$\mu=0.7$} \\\midrule
    p=200 & (0, 50) & (0, 50)& (140.9, 33.7) \\
    p=500 & (0, 50) & (295.5, 49)  & (440.4, 34.9) \\
    p=1000 & (0, 50) & (685.7, 47.5) & (937.3, 31.35)\\
    \bottomrule
    \end{tabular}%
  \label{tab:CER}%
\end{table}%

\begin{figure}
\begin{center}
  \includegraphics[width=350pt]{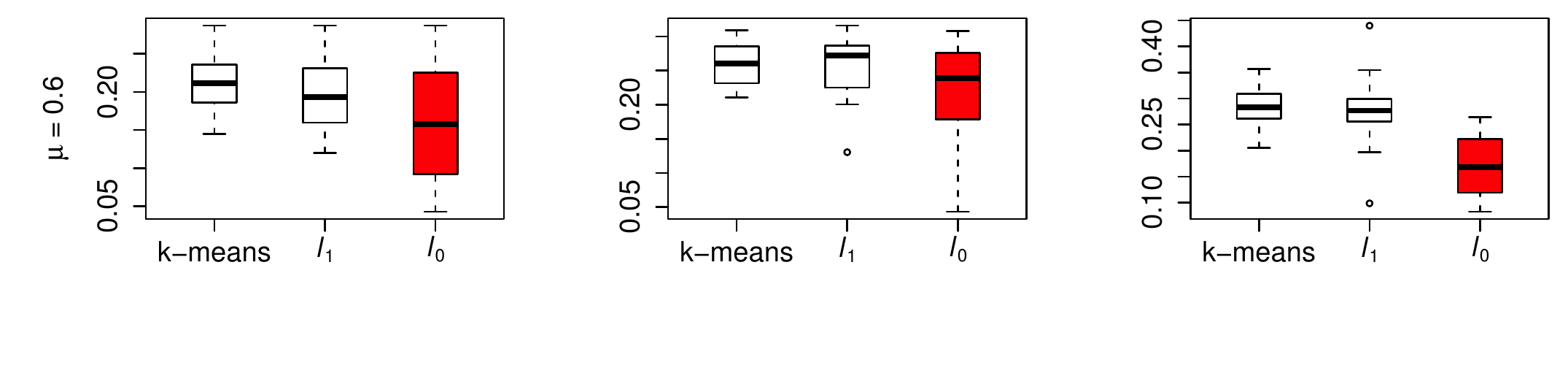}\\
  \includegraphics[width=350pt]{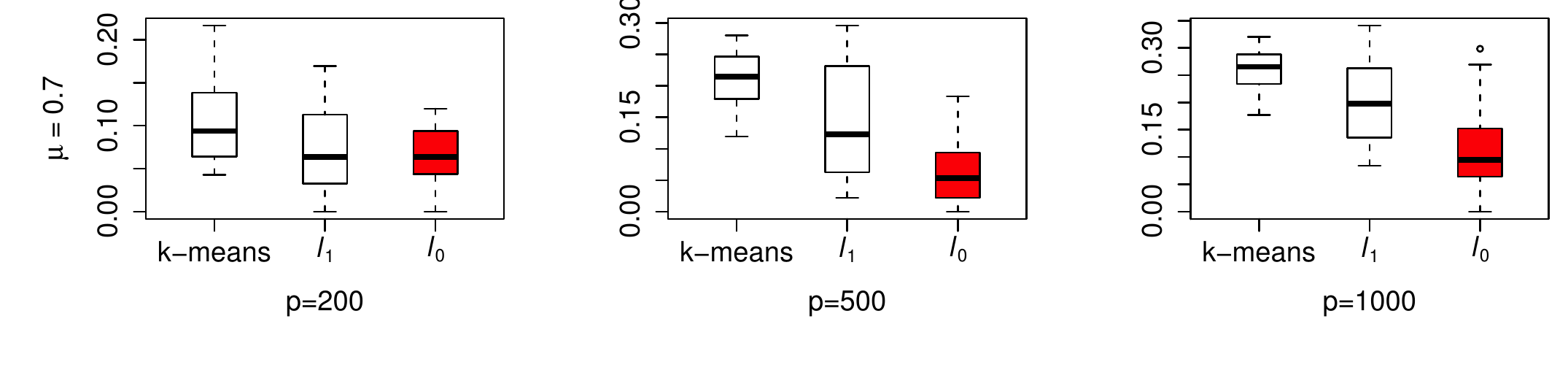}
\caption{\emph{Boxplot of CER for different combinations of $\mu$ and $p$.}}\label{CERS1}
  \end{center}
\end{figure}
From Figure \ref{CERS1}, we can observe that the mean CER values of standard k-means are always greater than those of sparse k-means. Therefore, clustering capacity of sparse k-means tends to outperform k-means when the noise features exist. Moreover, the boxplot shows that $\ell_{0}$-k-means almost has the best clustering performance due to the smallest mean of CER. In order to explain this phenomenon, Table \ref{WS1} shows other criteria. From the table, we can find that although most relevant features are found by $\ell_1$-k-means, it keeps more noise features than the $\ell_0$-k-means. Particularly, the $\ell_1$-k-means completely failed when $p=200$ because it kept all the noise features in the clustering results. Note that this is coherent with the results offered in Witten and Tibshirani's original paper \cite{witten2010}. On the contrary, $\ell_0$-k-means can detect more noise features than $\ell_1$-k-means at the price of eliminating a little more relevant features as well. This compromised property of $\ell_0$-k-means might result in a lower CER than the $\ell_1$-k-means.

{\it Experiment 3:} In this experiment, the proposed sparse k-means is compared with penalized log-likelihood approach~\cite{pan2007} (P-likelihood for short) and PCA followed by k-means~\cite{chang1983} (PCA-k-means for short). The data is generated similarly to the previous Experiment 2 with different parameter values. In this part, we consider two data sets of different sizes. The first data set consists of 3 clusters containing 10 samples each. Each sample has 25 features and 5 of them are relevant with $\mu=1$. The second data set consists of 3 clusters containing 20 samples each. Each sample has 500 features and 50 of them are relevant with $\mu=0.7$. The simulation is repeated 20 times. The averaged experimental results are shown in Table \ref{simutable2}.
\begin{table}[!h]
\tabcolsep 0pt \caption{Mean and standard deviation of CER, mean values of PZW and PNW for different models in Experiment 3.}\label{simutable2}
\vspace*{-6pt}
\begin{center}
\def\temptablewidth{1\textwidth}
{\rule{\temptablewidth}{1pt}}
\begin{tabular*}{\temptablewidth}{@{\extracolsep{\fill}}ccccc}
Simulation & Method  & CER & Mean of PZW & Mean of PNW \\
\hline
$p=25,\mu=1$, & k-means & 0.312(0.001)  &  0 & 5    \\
     & $\ell_1$-k-means & 0.308(0.003) & 12& 4.3 \\
     & $\ell_0$-k-means & {\bf 0.299(0.002)} & {\bf 10.65} & {\bf 4.6} \\
     & PCA-k-means & 0.333(0.003) & 0 & 5 \\
     & P-likelihood& 0.301(0.002) & 9.5 & 5 \\
$p=500,\mu=0.7$, & k-means & 0.237(0.001)  &  0 & 50    \\
          & $\ell_1$-k-means & 0.171(0.005) & 315& {\bf 49.2} \\
          & $\ell_0$-k-means & {\bf 0.058(0.002)} & {\bf 444.7} & 34.7 \\
          & PCA-k-means &{0.103(0.003)} & 0 & 50 \\
          & P-likelihood& 0.168(0.003) & 424.3 & 37.3 \\
       \end{tabular*}
       {\rule{\temptablewidth}{1pt}}
       \end{center}
       \end{table}

Let us make a few comments for Table \ref{simutable2}. First of all, CERs of PCA-k-means and standard k-means are higher compared with the $\ell_0$-k-means and $\ell_1$-k-means. The reason is that principal components are linear combinations of all features (includes noise features) and k-means treats all features equally. Thus, the noise features influence the clustering capacity of two methods dramatically. Since the penalized model-based clustering method of Pan and Shen~\cite{pan2007} also considers the noise features of clustering data, it resulted in relatively low CER in the simulation.
For the suggested $\ell_0$-k-means, it generally achieved better clustering performance (CER) than other comparable models. It is because $\ell_0$-k-means has eliminated most noise features in the simulation (See PZW).

{\it Experiment 4:}
For previous synthetic data, we assume different features are independent, while for real life applications, this is often not true. In order to validate our algorithms in broader settings, we consider situations when different features are correlated. Suppose $\bx_i\sim \mathcal{N}(\mu,\Sigma)$, where $\Sigma_{ab}=\rho^{|a-b|}$. Similar to experiment 1, suppose there exists 6 clusters and each contains 20 samples. There are 2000 features among which 200 are relevant. For the $k^{th}$ cluster, its centroid is $1\cdot k$ on relevant features and $0$ on noise features. The experiment is conducted 20 times.
As shown in Figure \ref{relevant} and Table \ref{tab:weight}, $\ell_0$-k-means still performs better than $\ell_1$-k-means.

From the experiments 1-4, we can conclude that the $\ell_0$-k-means algorithm generally outperforms the $\ell_1$-k-means and standard k-means in generating lower CER and, in particular, the $\ell_0$-k-means has an obvious stronger capability of eliminating the noise features than the $\ell_1$-k-means.
\begin{figure}
  \centering
  \includegraphics[width=.45\textwidth]{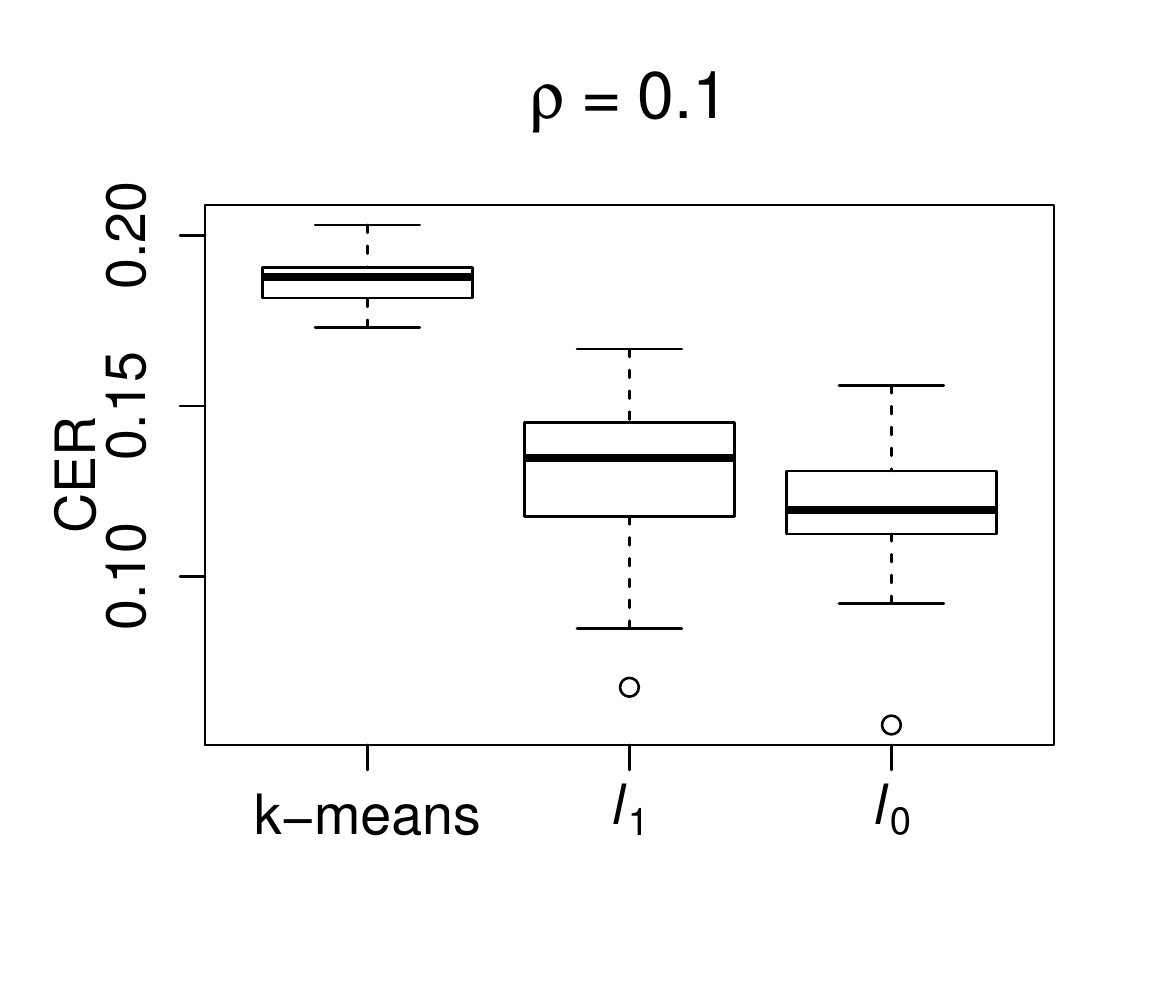}
  \includegraphics[width=.45\textwidth]{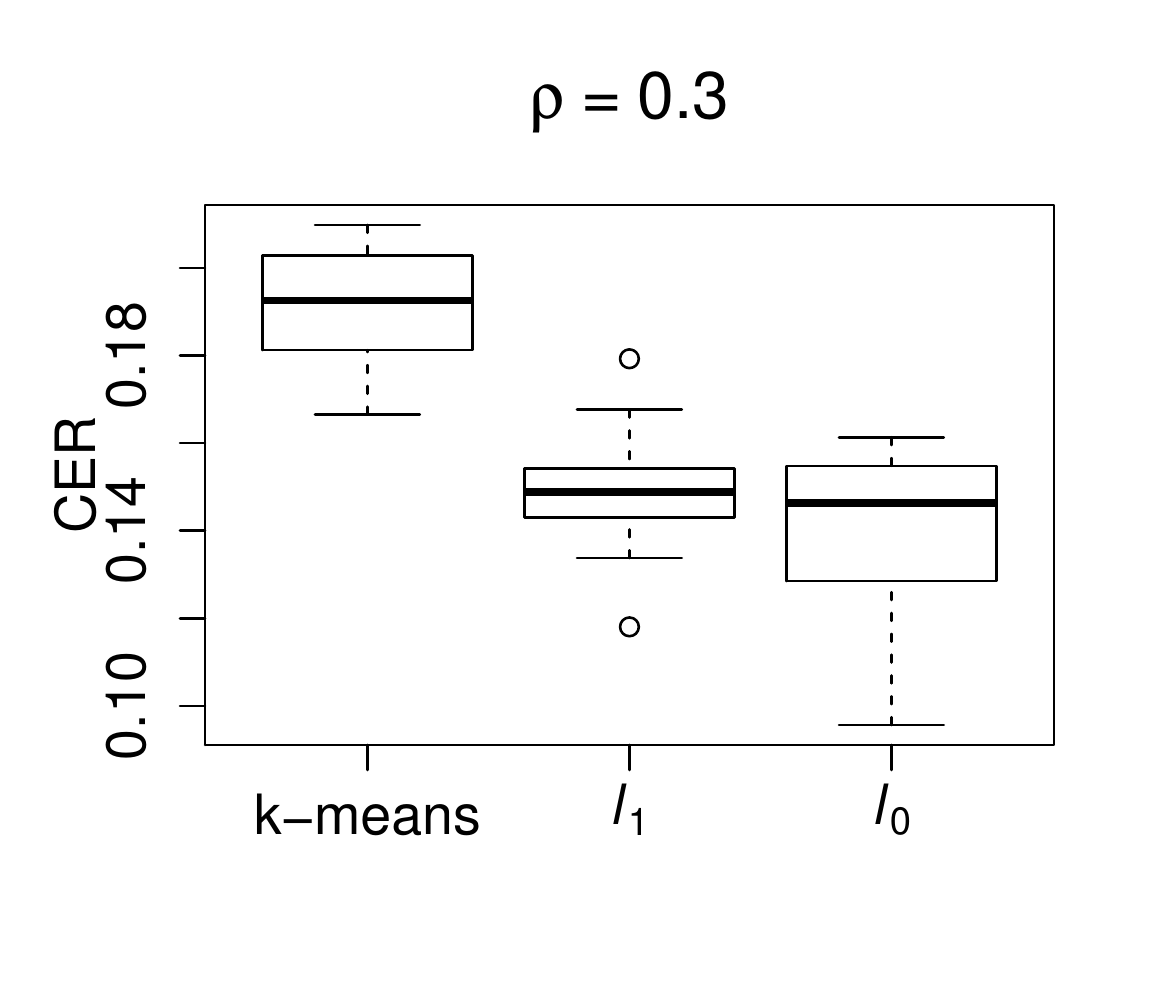}
  \caption{CER for data with correlated features. The left one is $\rho=0.1$ and the right one is $\rho=0.3$.}\label{relevant}
\end{figure}

\begin{table}[ht]
  \centering
  \caption{Mean value and standard deviation of PZW and PNW for different $\rho$.}\label{WS4}
    \begin{tabular}{rrrr}
    \toprule
    \multicolumn{4}{c}{$\rho=0.3$} \\
    \midrule
          & k-means & $\ell_1$-k-means & $\ell_0$-k-means \\
       PNW & 200(0) & 186(13)  & 157(12) \\
    PZW & 0(0) & 1760(35)  & 1799(1) \\
    \midrule
    \multicolumn{4}{c}{$\rho=0.1$} \\\midrule
          & k-means & $\ell_1$-k-means & $\ell_0$-k-means \\
        PNW & 200(0) & 188(13)  & 150(18) \\
    PZW & 0(0) & 1754(33)  & 1799(1) \\
    \bottomrule
    \end{tabular}%
  \label{tab:weight}%
\end{table}%

\subsection{Evaluation on application to mouse Brain Atlas Data}
In this subsection, we further evaluate and compare the clustering and noise feature eliminating capability of the respective algorithms by applying to a Allen Developing Mouse Brain Atlas data~\cite{lein2006, ji2012}. This data set contains {\it in situ} hybridization gene expression pattern images of a developing mouse brain across 7 developmental ages. The mouse brain is imaged into 3D space with voxels in a regular grid. The expression energy at each voxel for some gene is recorded as a numerical value. Through such operation, 7 data matrices associated with each of 7 developmental ages are obtained. In these data matrices, rows correspond to brain voxels and columns correspond to genes. With the development of mouse brain, the rows of energy matrices increase because as the size of brain grows larger, more and more voxels are needed to stabilize the resolution. The basic statistics of the data are listed as in Table \ref{level3}, and Figure \ref{slices} shows the sample slices of 7 developmental mouse brains with respect to the gene {\it Neurog1}. In deed, each voxel is annotated to a brain region manually, which can be viewed as the cluster labels.

\begin{table}[!h]
\tabcolsep 0pt \caption{Statistics of mouse brain data at annotation level 3.}\label{level3}
\vspace*{-6pt}
\begin{center}
\def\temptablewidth{1\textwidth}
{\rule{\temptablewidth}{1pt}}
\begin{tabular*}{\temptablewidth}{@{\extracolsep{\fill}}cccccccc}
Ages & E11.5  & E13.5 & E15.5 & E18.5 & P4 & P14 & P28\\
\hline
       Number of genes & 1724 & 1724  &  1724 & 1724 & 1724 & 1724 & 1724   \\
       Number of voxels & 7122 & 13194 & 12148& 12045 & 21845 & 24180 & 28023\\
       Number of regions & 20 & 20 & 20 & 20 & 20 & 19 & 20\\

       \end{tabular*}
       {\rule{\temptablewidth}{1pt}}
       \end{center}
       \end{table}

\begin{figure}
\begin{center}
  \includegraphics[width=450pt]{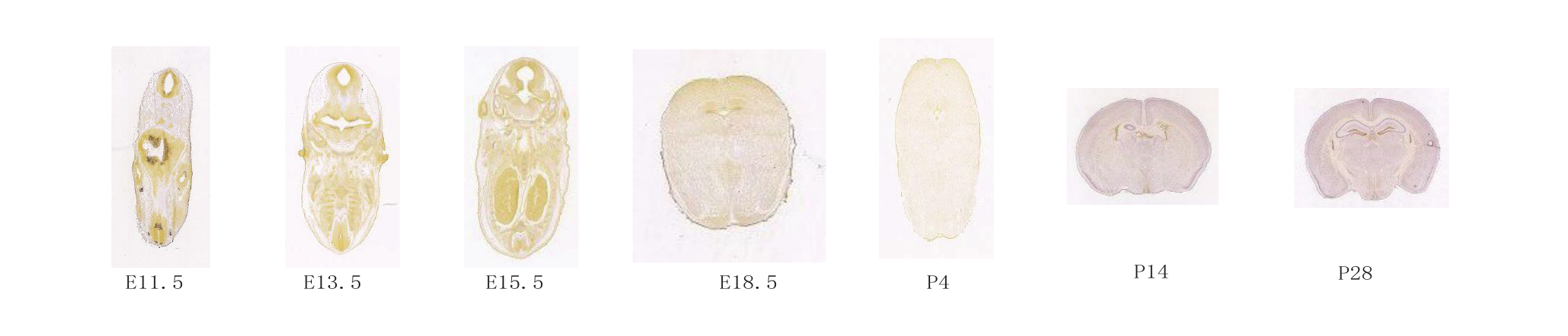}
\caption{\emph{Selected sample slices of 7 developmental mouse brains with respect to the gene Neurog1.}}\label{slices}
  \end{center}
\end{figure}

\begin{table}[!h]
\tabcolsep 0pt \caption{The CER values of clustering when the algorithms are applied to Allen Developing Mouse Brain Atlas data.}\label{CERlevel3}
\vspace*{-6pt}
\begin{center}
\def\temptablewidth{1\textwidth}
{\rule{\temptablewidth}{1pt}}
\begin{tabular*}{\temptablewidth}{@{\extracolsep{\fill}}cccccccc}
Ages & E11.5  & E13.5 & E15.5 & E18.5 & P4 & P14 & P28\\

\hline
 k-means  & 0.1610  &  0.1877  & {\bf 0.2055}  &   0.2369  &  0.3444  &  0.3628  &  0.3599\\
 $\ell_1$-k-means  & 0.1662  &  0.1985  &  0.2221  &  0.2425  &  0.3308  &  { 0.3593}  & \bf 0.3470\\
 $\ell_{0}$-k-means  & {\bf 0.1605}  & {\bf 0.1842} &  0.2259  &  {\bf 0.2358}  & {\bf 0.3306}  &  {\bf 0.3580}  &{ 0.3505}
       \end{tabular*}
       {\rule{\temptablewidth}{1pt}}
       \end{center}
\end{table}
\begin{table}[!h]
\tabcolsep 0pt \caption{The NW values of clustering
when the algorithms applied to Allen Developing Mouse Brain Atlas data.}\label{NWlevel3}
\vspace*{-6pt}
\begin{center}
\def\temptablewidth{1\textwidth}
{\rule{\temptablewidth}{1pt}}
\begin{tabular*}{\temptablewidth}{@{\extracolsep{\fill}}cccccccc}
Ages & E11.5  & E13.5 & E15.5 & E18.5 & P4 & P14 & P28\\

\hline
 k-means & 1723     &   1724 &       1724     &   1724 &       1720   &     1724&        1724\\
 $\ell_1$-k-means   &717      & 672    &  659     & 642   &  446   & 224  &  1724\\
 $\ell_{0}$-k-means  &100   & 660      &  100 &  1600      & 199  &  322    &  1068
        \end{tabular*}
       {\rule{\temptablewidth}{1pt}}
       \end{center}
\end{table}

We have applied the $\ell_0$-k-means, $\ell_1$-k-means, and standard k-means respectively to the 7 data sets (matrices). The application results (the CER values and the feature selection information) are shown in Table \ref{CERlevel3} and \ref{NWlevel3}. From Tables \ref{CERlevel3}, we can find that the $\ell_0$-k-means in most the cases outperforms the $\ell_1$-k-means and standard k-means, almost always with lower CER values.

The main improvement of the $\ell_0$-k-means is interpretability, since it often keeps relatively low CER values while using minimal number of features (nonzero weights $\bw$). The reason may be that the $\ell_0$-k-means can eliminate more noise features compared with others. Here we only focus on the the last two postnatal stages (P14 and P28),  because the differentiation of gene function is much more discriminative when a mouse is at the postnatal stage. Firstly, we investigate the P14 data. We observe that there is a few "noisy" genes which has been eliminated by the $\ell_0$-k-means and involved by the $\ell_1$-k-means. For instance, gene ‘Scn4b’, whose official name is ‘sodium channel, voltage-gated, type IV, beta subunit’, is highly related with the protein composition of sodium channel beta subunits~\cite{medeiros2007scn4b}. These subunits interact with voltage-gated alpha subunits to change sodium channel kinetics. The Gene Ontology (GO) annotations about this gene also include ion channel binding. In short, the protein encoded by this gene is one of the main elements of controlling the electrical signal transmission activity in cells including nerve, muscle, and neuroendocrine cell types. Therefore, it is much more reasonable to consider this gene as a noise feature since its function is uniformly supportive in the whole brain and its usage to distinguish different regions might not be effective~\cite{allenapi}. Thus, this supports that the proposed $\ell_0$-k-means detects the corrected noise gene in our experiment.
We also evaluate the performance of two methods respectively on P56 data. We observe that the weights computed by the $\ell_1$-k-means are all nonzero. In contrast, some features are still identified as noise features by proposed $\ell_0$-k-means with zero weights. This result is more consistent with the prior knowledge about genes listed in the database of Allen Institute~\cite{genecard}. Overall, the experiments demonstrate that $\ell_0$-k-means exhibits an outperforming capacity on the noise feature detection task.

\section{Conclusion}\label{conclusions}
In the article, we introduced two new yet rigorous concepts of optimal partition and noise features for high-dimensional data clustering problem. Motivated by these new concepts, we proposed a new sparse clustering framework with the $\ell_{\infty}/\ell_0$ penalty to eliminate noise features and yield the optimal partition of a data matrix simultaneously. As a realization of the framework, we suggested an $\ell_0$-k-means algorithm for comparing with the existed $\ell_1$-k-means algorithm, which used a very efficient closed form solution to solve the resultant non-convex and non-smooth optimization problem. The suggested $\ell_0$-k-means algorithm is theoretically analyzed and experimentally assessed. Based on the theoretical analysis and experiment studies, we can summarize the main contributions and significance of the present research as follows.

\begin{itemize}
\item The concepts of optimal partition and noise features are rigorously and quantitatively defined for high-dimensional data clustering problems. The new defined concepts cater for the analysis and application of clustering in the high-dimensional setting in which dimension may vary, that is, the number of features can grow as the number of sample size. We have shown (Theorem \ref{optimial_partition_theory}) that in usual cases the reasonableness of optimal partition and existence of noise features to support appropriateness of the new definitions for high-dimensional data clustering problems.

\item An efficient new sparse clustering algorithm for high-dimensional data clustering, the $\ell_0$-k-means, is suggested within the framework of classical k-means formulation plus the $\ell_{\infty}/\ell_0$ penalty. We found the closed-form solution of the resultant non-convex optimization problem (Theorem \ref{theorem1}) which makes the proposed $\ell_0$-k-means very efficient. It is because that despite the time consumed by the standard k-means, the proposed algorithm could be solved in $O(p\lfloor s \rfloor)$ time which is acceptable in most real applications. The experiment studies support the high efficiency.

\item With the new definitions of the optimal partition and noise features, it is shown that $\ell_0$-k-means possesses the feature selection consistency property. This distinguishes the $\ell_0$-k-means from the existing $\ell_1$-k-means for which it is still open whether or not it is feature selection consistent. This theoretical success for $\ell_0$-k-means supports the validity and appropriateness of the new framework suggested in this paper.

\item The experiment studies show that the $\ell_0$-k-means has its own set of advantages compared with the $\ell_1$-k-means, standard k-means and some other related well-known clustering algorithms in generating lower classification error rate and exhibiting more stronger ability to eliminating more noise features. It is demonstrated also that the results yielded by the $\ell_0$-k-means is more interpretable in practice.
\end{itemize}
There are many problems that deserve to further examine along the same line of the present study. One problem is, for instance, if it is possible to establish the feature selection consistency property for the $\ell_1$-k-means within the framework of the present paper. Another problem is to generalize the established theory in this paper to the case the feature may be strongly correlated (or any correlated data set). It is also expected to develop a more generic framework within which more spare clustering method, not only k-means, can be uniformly studied and compared. All these problems are under our current study.

\section{Appendix: Proofs}\label{appendix}
\subsection{Complement Lemmas}
We provide some useful lemmas that support our proofs in this section. In the first lemma we reformulate BCSS to facilitating our derivation. And the second lemma is a tail bound of sub-exponential random variables which can be applied to get Lemma \ref{tailchi}. To estimate the upper bound of the optimal value of (\ref{fepsilon}), we introduce Lemma \ref{ecrlimit}. %
\begin{lemma}\label{ajreform}
Under the same setting we have described at subsection \ref{consistensection}, we can obtain $a_j$ which we denoted in (\ref{aj}) has the reformulation
\begin{eqnarray}
a_j=\sum_{k=1}^K(\frac{\sum_{i\in \tilde{\mathcal{C}}_k}x_{ij}}{\sqrt{n\tilde{\pi}_k}})^2-(\frac{\sum_{i=1}^nx_{ij}}{\sqrt{n}})^2,
\end{eqnarray}
where $n_k,k=1,2,\dots,K$ is the number of sample size in cluster $\tilde{\mathcal{C}}_k$ and $\tilde{\pi}_k\triangleq n_k/n$. Therefore,
$$BCSS(\tilde{\mathcal{C}})=\sum_{j=1}^{p}a_j=\sum_{j=1}^{p}\Big(\sum_{k=1}^K(\frac{\sum_{i\in \tilde{\mathcal{C}}_k}x_{ij}}{\sqrt{n\tilde{\pi}_k}})^2-(\frac{\sum_{i=1}^nx_{ij}}{\sqrt{n}})^2\Big).$$
\end{lemma}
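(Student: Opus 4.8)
The plan is to establish the identity by a direct algebraic expansion of the squared differences $d_{ii'j}=(x_{ij}-x_{i'j})^2$, and then to use the fact that the clusters $\tilde{\mathcal{C}}_1,\dots,\tilde{\mathcal{C}}_K$ partition the sample index set $\{1,\dots,n\}$ to cancel the pure square contributions. Once $a_j$ is brought into the claimed form, summing over $j$ gives the $BCSS$ formula immediately, so the entire content lies in the single-feature identity.

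First I would record the elementary fact that, for any finite index set $S$ and any reals $\{y_i\}_{i\in S}$,
$$\sum_{i,i'\in S}(y_i-y_{i'})^2 = 2\,|S|\sum_{i\in S}y_i^2 - 2\left(\sum_{i\in S}y_i\right)^2,$$
which follows by expanding $(y_i-y_{i'})^2=y_i^2-2y_iy_{i'}+y_{i'}^2$ and summing the three pieces separately. Applying this with $S=\{1,\dots,n\}$ yields a closed form for the global term $\frac{1}{n}\sum_{i,i'}d_{ii'j}$, and applying it with $S=\tilde{\mathcal{C}}_k$ for each $k$ and then summing over $k$ yields a closed form for the within-cluster term $\sum_{k}\frac{1}{n_k}\sum_{i,i'\in\tilde{\mathcal{C}}_k}d_{ii'j}$.

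Subtracting the two expressions is where the key cancellation occurs: the coefficient of the pure-square contribution $\sum_i x_{ij}^2$ agrees in both terms, because the global sum produces $2\sum_{i=1}^n x_{ij}^2$ while the disjoint union $\bigcup_k\tilde{\mathcal{C}}_k=\{1,\dots,n\}$ forces the per-cluster squares $2\sum_{i\in\tilde{\mathcal{C}}_k}x_{ij}^2$ to add up to exactly the same quantity. These cancel, leaving only $-\frac{1}{n}\left(\sum_{i=1}^n x_{ij}\right)^2 + \sum_k\frac{1}{n_k}\left(\sum_{i\in\tilde{\mathcal{C}}_k}x_{ij}\right)^2$; writing $n_k=n\tilde{\pi}_k$ and regrouping the squares casts this into the stated form. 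There is no genuine obstacle here --- the statement is a bookkeeping identity --- and the only point requiring care is the summation convention (whether $\sum_{i,i'}$ ranges over ordered or unordered pairs and whether it includes the diagonal $i=i'$), since this choice fixes the overall normalizing constant; once it is pinned down, the cancellation of the $\sum_i x_{ij}^2$ terms is the whole of the argument.
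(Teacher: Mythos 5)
Your proof is correct and follows essentially the same route as the paper: expand $d_{ii'j}=(x_{ij}-x_{i'j})^2$ over ordered pairs, observe that the $\sum_i x_{ij}^2$ contributions cancel because the clusters partition $\{1,\dots,n\}$, and rewrite $n_k=n\tilde{\pi}_k$ to obtain the stated form, after which summing over $j$ is immediate. The normalization point you flag is genuine --- the paper's own proof silently works with $\tfrac{1}{2n}\sum_{i_1,i_2}$ and $\tfrac{1}{2n_k}\sum_{i_1,i_2\in\tilde{\mathcal{C}}_k}$ rather than the coefficients appearing in (\ref{aj}), absorbing the overall factor of $2$ that the ordered-pair convention produces, so your explicit attention to that convention is if anything more careful than the paper's.
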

\begin{proof}
Based on the definition of $a_j,j=1,2,\dots,p$, we have
\begin{eqnarray}
 a_j&=&\frac{1}{2n}\sum_{i_1,{i_2}}(x_{i_1j}-x_{i_2j})^2-\sum_{k=1}^K\frac{1}{2n_k}\sum_{i_1,i_2\in \tilde{\mathcal{C}}_k}(x_{i_1j}-x_{i_2j})^2\\\nonumber
&=&\sum_{i}x_{ij}^2-\frac{1}{n}(\sum_{i}x_{ij})^2-
\sum_{k=1}^{K}(\sum_{i\in \tc_k}x_{ij}^2-\frac{1}{n_k}(\sum_{i\in \tilde{\mathcal{C}}_k}x_{ij})^2)\\\nonumber
&=&-\frac{1}{n}(\sum_{i}x_{ij})^2+
\sum_{k=1}^{K}\frac{1}{n_k}(\sum_{i\in \tilde{\mathcal{C}}_k}x_{ij})^2\\\nonumber
&=&\sum_{k=1}^{K}(\frac{\sum_{i\in \tilde{\mathcal{C}}_k}x_{ij}}{\sqrt{n\tilde{\pi}_k}})^2-
(\frac{\sum_{i=1}^{n}x_{ij}}{\sqrt{n}})^2.
\end{eqnarray}
\end{proof}

To implicitly describe the next lemma, we first denote a random variable $Z$ with mean $\mu=\mathbb{E}(Z)$ is {\it sub-exponential} if there are non-negative parameters $(v,b)$ such that
$$\mathbb{E}[\exp\{\lambda(Z-\mu)\}]\leq \exp\{\frac{v^2\lambda^2}{2}\}\ \text{for all}\ |\lambda|<\frac{1}{b}.$$
\begin{lemma}\label{subtail}(Sub-exponential Tail Bound)
Suppose that $Z$ is sub-exponential with parameters $(v,b)$. Then
$$\mathbb{P}[Z\geq \mu+t]\leq \left\{\begin{array}{cc}
  \exp\{-\frac{t^2}{2v^2}\} & \text{if}\ \ 0\leq t\leq\frac{v^2}{b} \\
  \exp\{-\frac{t}{2b}\}& \text{if}\ \ t>\frac{v^2}{b}
\end{array}\right..$$
\end{lemma}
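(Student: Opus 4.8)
The plan is to use the standard Chernoff (exponential Markov) argument and then optimize the resulting exponent over the admissible range of the tilting parameter $\lambda$. First I would fix any $\lambda$ with $0\le\lambda<1/b$ and apply Markov's inequality to the nonnegative random variable $\exp\{\lambda(Z-\mu)\}$, which gives
$$\mathbb{P}[Z\ge\mu+t]=\mathbb{P}[\exp\{\lambda(Z-\mu)\}\ge e^{\lambda t}]\le e^{-\lambda t}\,\mathbb{E}[\exp\{\lambda(Z-\mu)\}]\le \exp\Big\{\tfrac{v^2\lambda^2}{2}-\lambda t\Big\},$$
where the last inequality is exactly the sub-exponential hypothesis. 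It then remains to minimize the exponent $g(\lambda)=\tfrac{v^2\lambda^2}{2}-\lambda t$ over $\lambda\in[0,1/b)$.

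Next I would analyze the parabola $g$, whose unconstrained minimizer is $\lambda^\star=t/v^2$. This naturally splits the argument into the two cases appearing in the statement, according to whether $\lambda^\star$ falls inside the admissible interval. In the regime $0\le t\le v^2/b$ we have $\lambda^\star\le 1/b$, so the optimal tilt is admissible; substituting $\lambda=t/v^2$ yields $g(\lambda^\star)=-t^2/(2v^2)$, which produces the first branch $\exp\{-t^2/(2v^2)\}$. In the regime $t>v^2/b$ the unconstrained optimizer exceeds $1/b$, so I would instead evaluate at the boundary $\lambda=1/b$, where $g$ is still strictly decreasing on $[0,\lambda^\star)\supseteq[0,1/b]$; a short computation shows $g(1/b)=v^2/(2b^2)-t/b\le -t/(2b)$, the last inequality holding precisely because $t>v^2/b$, which gives the second branch $\exp\{-t/(2b)\}$.

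The only delicate point is the boundary case: since the moment generating function bound is assumed only for $|\lambda|<1/b$ \emph{strictly}, evaluating at $\lambda=1/b$ requires a limiting argument, letting $\lambda\uparrow 1/b$ and invoking the monotonicity of $g$ on $[0,\lambda^\star)$ together with continuity to conclude that the infimum of the exponent over the open interval is controlled by its value at the endpoint. Aside from this technicality, the proof is a routine two-line optimization, so I expect no genuine obstacle.
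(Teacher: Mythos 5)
Your proof is correct and complete: the Chernoff--Markov tilting argument, the case split at $\lambda^\star=t/v^2$ versus the constraint $\lambda<1/b$, the inequality $g(1/b)=v^2/(2b^2)-t/b\le -t/(2b)$ in the second regime, and the limiting argument $\lambda\uparrow 1/b$ for the open interval are all exactly right. The paper itself gives no proof of this lemma (it only cites a reference), and your argument is precisely the standard one that such a citation stands in for, so there is nothing further to reconcile.
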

\begin{proof}
It can be found in~\cite{foss2011introduction}.
\end{proof}

Based on Lemma \ref{subtail}, we can directly get the next lemma.
\begin{lemma}\label{tailchi}
Suppose $Z=\sum_{i=1}^{d}(Y_i+b_i)^2$, where $Y_i\sim \mathcal{N}(0,1)$. Then $Z$ is a sub-exponential random variable with parameters $(2\sqrt{d+2\sum_{i=1}^{d}b_i^2},4)$. And we have the tail bound
$$\mathbb{P}(|Z-\mathbb{E}Z|>t)\leq 2\exp(-\frac{t^2}{16\mathbb{E}Z})\quad \forall t<\mathbb{E}Z.$$
\end{lemma}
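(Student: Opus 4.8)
The plan is to prove Lemma \ref{tailchi} in two stages: first establish that $Z=\sum_{i=1}^d(Y_i+b_i)^2$ is sub-exponential with the stated parameters $(v,b)=(2\sqrt{d+2\sum_i b_i^2},4)$, and then feed these parameters into the Sub-exponential Tail Bound (Lemma \ref{subtail}) to extract the two-sided deviation estimate. The random variable $Z$ is a noncentral $\chi^2$ variable with $d$ degrees of freedom and noncentrality $\sum_i b_i^2$, so I would compute its moment generating function directly rather than appeal to any general concentration machinery.

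First I would compute $\mathbb{E}[\exp\{\lambda(Z-\mu)\}]$ where $\mu=\mathbb{E}Z=d+\sum_{i=1}^d b_i^2$. Since the $Y_i$ are independent standard Gaussians, the MGF factorizes over coordinates, and each factor is the standard noncentral chi-square MGF: for $|\lambda|<1/2$,
\begin{equation}
\mathbb{E}[\exp\{\lambda(Y_i+b_i)^2\}]=(1-2\lambda)^{-1/2}\exp\Big\{\frac{\lambda b_i^2}{1-2\lambda}\Big\}.
\end{equation}
Multiplying and centering by $\exp\{-\lambda\mu\}$ gives
\begin{equation}
\mathbb{E}[\exp\{\lambda(Z-\mu)\}]=(1-2\lambda)^{-d/2}\exp\Big\{\lambda\Big(\sum_i b_i^2\Big)\frac{2\lambda}{1-2\lambda}-\lambda d\Big\}.
\end{equation}
The goal is then to bound the logarithm of this expression by $v^2\lambda^2/2$ for $|\lambda|<1/b=1/4$. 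I would take logarithms and Taylor-expand: $-\tfrac{d}{2}\log(1-2\lambda)-\lambda d=\tfrac{d}{2}\sum_{m\geq 2}\frac{(2\lambda)^m}{m}$, and the noncentral term contributes $\sum_i b_i^2\cdot\frac{2\lambda^2}{1-2\lambda}$. Both pieces are of order $\lambda^2$ near the origin, and the standard estimate (valid for $|\lambda|<1/4$) is to bound each geometric-type tail by a constant multiple of $\lambda^2$, yielding the combined bound $\tfrac{1}{2}\big(4d+8\sum_i b_i^2\big)\lambda^2=\tfrac{1}{2}v^2\lambda^2$ with $v^2=4(d+2\sum_i b_i^2)$, exactly matching $v=2\sqrt{d+2\sum_i b_i^2}$.

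Once the sub-exponential parameters are verified, the tail bound follows by specializing Lemma \ref{subtail}. I would apply it to both $Z$ and $-Z$ (the MGF computation above is symmetric in its validity on $|\lambda|<1/4$, so $Z$ is sub-exponential with the same parameters as a two-sided variable), giving $\mathbb{P}(|Z-\mathbb{E}Z|>t)\leq 2\exp\{-t^2/(2v^2)\}$ in the regime $0\leq t\leq v^2/b$. Substituting $v^2=4(d+2\sum_i b_i^2)=4\,\mathbb{E}Z$ when the $b_i$-contribution is folded appropriately — more precisely bounding $v^2\leq 8\,\mathbb{E}Z$ so that $t^2/(2v^2)\geq t^2/(16\,\mathbb{E}Z)$ — yields the claimed $2\exp\{-t^2/(16\,\mathbb{E}Z)\}$, and I would check that the threshold $t<\mathbb{E}Z$ keeps us inside the Gaussian-type branch $t\leq v^2/b$ so that the quadratic (not linear) tail is the relevant one.

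The main obstacle I anticipate is the bookkeeping in the second paragraph: justifying that the series $\tfrac{d}{2}\sum_{m\geq 2}\frac{(2\lambda)^m}{m}$ together with the noncentral term is genuinely dominated by $\tfrac{1}{2}v^2\lambda^2$ uniformly over the whole interval $|\lambda|<1/4$, not merely asymptotically as $\lambda\to 0$. This requires a clean uniform bound on the tail of the logarithmic series — typically $\sum_{m\geq 2}\frac{(2\lambda)^{m-2}}{m}\leq \sum_{m\geq 0}(2\lambda)^m=\frac{1}{1-2\lambda}\leq 2$ on $|\lambda|\leq 1/4$ — and a parallel estimate $\frac{1}{1-2\lambda}\leq 2$ for the noncentral factor, which is precisely what forces the choice $b=4$ (so that $1/b=1/4$ is the radius on which these geometric bounds hold). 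Reconciling the exact constant $16$ in the final tail with the factor arising from $v^2$ may require a mild slackening of the parameter, which is standard and does not affect the stated form.
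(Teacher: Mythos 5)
Your proposal is correct and follows essentially the same route as the paper: compute the noncentral chi-square MGF coordinatewise, bound the centered log-MGF by $\tfrac{1}{2}v^2\lambda^2$ on $|\lambda|<1/4$ with $v^2=4(d+2\sum_i b_i^2)$, and then invoke Lemma \ref{subtail} (two-sidedly) together with $2v^2\leq 16\,\mathbb{E}Z$ and the check that $t<\mathbb{E}Z\leq v^2/b$ keeps one in the Gaussian branch. The only cosmetic difference is that the paper verifies the elementary inequality $\exp(-\lambda)(1-2\lambda)^{-1/2}\leq\exp(2\lambda^2)$ directly rather than via the Taylor expansion of the logarithm, and it leaves the final tail-bound bookkeeping implicit where you spell it out.
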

\begin{proof}
Suppose $\lambda<1/2$, we have
$$\mathbb{E}[\exp\left(\lambda((Y_i+b_i)^2-1-b_i^2)\right)]=
\exp(\frac{2b_i^2\lambda^2}{1-2\lambda}-\lambda)(1-2\lambda)^{-\frac{1}{2}}.$$
It is easy to verify that $\exp(-\lambda)(1-2\lambda)^{-\frac{1}{2}}\leq\exp(2\lambda^2)$, for any $|\lambda|<\frac{1}{4}$. Therefore, we have $$\exp(\frac{2b_i^2\lambda^2}{1-2\lambda}-\lambda)(1-2\lambda)^{-\frac{1}{2}}
<\exp((4b_i^2+2)\lambda^2)\quad \forall |\lambda|<\frac{1}{4}.$$
Based on this calculation, we know it is sub-exponential with parameters $(2\sqrt{2b_i^2+1},4)$. Summing up $(Y_i+a_i)^2$ together, we know $Z$ is sub-exponential with parameters $(2\sqrt{2\sum_ib_i^2+d},4)$. Using Lemma \ref{subtail}, we can get the above tail bounds.
\end{proof}
\begin{lemma}\label{ecrlimit}
Let $f(\cdot)$ be the function defined in (\ref{fepsilon}). Then
$f$ is a decreasing function where $f(0)=\sum_{k}\pi_{k}\mu_{k}^2$ and $f(\epsilon)<f(0),\forall \epsilon>0$.
\end{lemma}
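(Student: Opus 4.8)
The plan is to recognize that Lemma~\ref{ecrlimit} is, despite its analytic phrasing, entirely a statement about a single convexity inequality applied to the confusion matrix $\boldsymbol{\pi}=(\pi_{kk'})$ that records the overlap between the optimal partition $\mathcal{C}^*$ and a competing partition $\tilde{\mathcal{C}}$. Using the reformulation of $a_j$ from Lemma~\ref{ajreform} together with the model (\ref{step1})--(\ref{step2}), I would first record that, for a relevant feature $j\leq p^*$, the partition-dependent part of $\mathbb{E}[a_j(\tilde{\mathcal{C}})]$ depends on $\tilde{\mathcal{C}}$ only through $\boldsymbol{\pi}$, namely through
$$g(\boldsymbol{\pi})=\sum_{k'=1}^K\frac{\big(\sum_{k}\pi_{kk'}\mu_k\big)^2}{\sum_{k}\pi_{kk'}},$$
and that the optimal value $f(\epsilon)$ of (\ref{fepsilon}) is the maximum of $g(\boldsymbol{\pi})$ over admissible couplings whose row marginals are the fixed true proportions $\pi_k=\sum_{k'}\pi_{kk'}$ and whose error rate $ECR=1-\sum_{k'}\max_k\pi_{kk'}$ equals $\epsilon$. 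The constant $-(\sum_k\pi_k\mu_k)^2$ appearing in $\mathbb{E}[a_j]$ depends only on the fixed $\pi_k$, so it plays no role in the maximization and is dropped in defining $f$. This identification is pure bookkeeping; the mathematical content is then confined to $g$.

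The key step is the pointwise bound $g(\boldsymbol{\pi})\le\sum_k\pi_k\mu_k^2$. Writing $\tilde{\pi}_{k'}=\sum_k\pi_{kk'}$ and viewing $\{\pi_{kk'}/\tilde{\pi}_{k'}\}_k$ as a probability distribution on the values $\{\mu_k\}_k$, convexity of $x\mapsto x^2$ gives
$$\frac{\big(\sum_k\pi_{kk'}\mu_k\big)^2}{\tilde{\pi}_{k'}}=\tilde{\pi}_{k'}\Big(\sum_k\tfrac{\pi_{kk'}}{\tilde{\pi}_{k'}}\mu_k\Big)^2\le\sum_k\pi_{kk'}\mu_k^2,$$
and summing over $k'$ with $\sum_{k'}\pi_{kk'}=\pi_k$ yields $g(\boldsymbol{\pi})\le\sum_k\pi_k\mu_k^2$. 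At $\epsilon=0$ the only admissible $\boldsymbol{\pi}$ is diagonal (perfect agreement), each Jensen term is an equality, and $g$ attains $\sum_k\pi_k\mu_k^2$; this gives $f(0)=\sum_k\pi_k\mu_k^2$. For the strict inequality I would invoke the equality case: since the $\mu_k$ are pairwise distinct, equality in the $k'$-th term forces column $k'$ of $\boldsymbol{\pi}$ to be supported on a single row, so term-by-term equality is equivalent to $\mathrm{purity}=1$, i.e. $\epsilon=0$. Hence any admissible $\boldsymbol{\pi}$ with $\epsilon>0$ has a non-degenerate column, Jensen is strict there, and $g(\boldsymbol{\pi})<\sum_k\pi_k\mu_k^2$; taking the maximum over the compact feasible set delivers $f(\epsilon)<f(0)$.

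Finally, for monotonicity I would run a transport-type perturbation that preserves the fixed row marginals. Given a maximizer at error level $\epsilon_2$, I pick two columns $k_1',k_2'$ and two rows $a,b$ carrying cross mass, and move a quantity $\delta$ simultaneously across the cells $(a,k_1'),(b,k_2')$ and $(a,k_2'),(b,k_1')$; this leaves every $\pi_k$ (and in fact every $\tilde{\pi}_{k'}$) unchanged, purifies both columns toward their majorities, hence strictly decreases $ECR$ while strictly increasing each affected Jensen term and therefore $g$. Reducing the error to any smaller level $\epsilon_1$ by such moves produces a feasible $\boldsymbol{\pi}'$ with $g(\boldsymbol{\pi}')>g$, so $f(\epsilon_1)\ge g(\boldsymbol{\pi}')>f(\epsilon_2)$, i.e. $f$ is strictly decreasing. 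I expect this last step to be the main obstacle: unlike the clean pointwise bound, it requires showing that a maximizer can never ``store'' extra error at no cost to the signal, which amounts to exhibiting an admissible purifying swap whenever $\epsilon>0$; the existence of such a swap must be checked at the extreme points of the transportation polytope (where some cross cells may vanish), and I would handle these boundary cases by a continuity/compactness argument together with the observation that positive error necessarily leaves at least one usable pair of mixed columns.
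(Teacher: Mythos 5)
Your identification of the objective as $g(\boldsymbol{\pi})=\sum_{k'}\bigl(\sum_{k}\pi_{kk'}\mu_k\bigr)^2/\tilde{\pi}_{k'}$, the Jensen bound $g(\boldsymbol{\pi})\le\sum_k\pi_k\mu_k^2$, and the equality-case analysis (each column a point mass $\Leftrightarrow$ purity $=1$, using that the $\mu_k$ are pairwise distinct) are correct and are exactly the convexity argument the paper uses to get $f(0)=\sum_k\pi_k\mu_k^2$ and $f(\epsilon)<f(0)$ for $\epsilon>0$. In fact your write-up of the equality case is more careful than the paper's one-line invocation of convexity.

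The gap is in the monotonicity step, and it stems from a misreading of the feasible set: the constraint in (\ref{fepsilon}) is $1-\sum_{k'}\max_k\pi_{kk'}\ge\epsilon$, not $=\epsilon$. With the inequality constraint the feasible sets are nested (decreasing in $\epsilon$), so $f(\epsilon_1)\ge f(\epsilon_2)$ for $\epsilon_1\le\epsilon_2$ is immediate -- this one-line observation is the entirety of the paper's monotonicity argument, and weak monotonicity is all the lemma (and its use in Theorem \ref{theorem2}) requires. By replacing the constraint with an equality you have made the step genuinely hard: your purifying-swap argument must produce an admissible coupling at every intermediate error level while strictly increasing $g$, and you yourself flag that the extreme points of the transportation polytope (where the needed cross cells vanish) are unresolved. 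As written, that part of the proposal is incomplete; it is also unnecessary once the constraint is read as $\ge\epsilon$. If you do want strict monotonicity of $f$, the cleaner route is still via nestedness plus your own equality-case analysis, not via explicit mass transport.
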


\begin{proof}
$f$ is a decreasing function because the smaller $\epsilon$ is, the larger the definition field will be, which results in higher objective function value. For any $\pi_{k,k'}$, if $1-\sum_{k'}\max_k\pi_{k,k'}>0$ then we have (by the convexity of function $x^2$) $$\sum_{k'}\tilde{\pi}_{k'}\left(\frac{\sum_{k}\tilde{\pi}_{k,k'}\mu_k}{\tilde{\pi}_{k'}}\right)^2
<\sum_{k}\pi_{k}\mu_k^2
$$
and if $1-\sum_{k'}\max_k\pi_{k,k'}=0$, we have $$\sum_{k'}\tilde{\pi}_{k'}\left(\frac{\sum_{k}\tilde{\pi}_{k,k'}\mu_k}{\tilde{\pi}_{k'}}\right)^2
=\sum_{k}\pi_{k}\mu_k^2.
$$
Thus $f(0)=\sum_{k}\pi_{k}\mu_k^2$ and $f(\epsilon)<f(0)$, for all $\epsilon>0$. This completes the proof.
\end{proof}

\subsection{Proof of Theorem \ref{optimial_partition_theory}}
\begin{proof}
Based on Lemma \ref{ajreform}, we can calculate the expectation of the $BCSS$ for the $j^{th}$ feature:
\begin{eqnarray}
\mathbb{E}a_j(\tc)&=&\mathbb{E}\sum_{k=1}^K(\frac{\sum_{i\in \tilde{\mathcal{C}}_k}x_{ij}}{\sqrt{n\tilde{\pi}_k}})^2-(\frac{\sum_{i=1}^nx_{ij}}{\sqrt{n}})^2\\
&=&n\sum_{k=1}^K\tilde{\pi}_{k}\tilde{\mu}_{k}^2-n(\sum_{k=1}^K\tilde{\pi}_k\tilde{\mu}_k)^2+K-1
\end{eqnarray}
where $\tilde{\pi}_k$ is the proportion of cluster $k$'s size and $\tilde{\mu}_k$ is the mean value for cluster $k$. $\tilde{\mu}_k=\sum_{k'}\pi_{k,k'}\mu_{k‘}$, where $\pi_{k,k'}$ is the proportion of samples both in cluster $\tc_k$ and $C^*_{k'}$ and $\mu_{k'}$ is the expectation of samples in cluster $C^*_k$.

(I) For $p^*<j\leq p$, we have $\mathbb{E}x_{ij}=0$. This shows $\tilde{\mu}_k=0$, Therefore we know they are noise features $\mathbb{E}a_j(\tc)=K-1,\forall \tc$. For other features $j\leq p^*$, consider $\mathbb{E}a_j(\mathcal{C}^*)=n\sum_{k=1}^K{\pi}_{k}{\mu}_{k}^2-n(\sum_{k=1}^K{\pi}_k{\mu}_k)^2+K-1$. Since $n\sum_{k=1}^K{\pi}_{k}{\mu}_{k}^2-n(\sum_{k=1}^K{\pi}_k{\mu}_k)^2>0$ holds because of the convexity of function $x^2$, we know $\mathbb{E}a_j(\mathcal{C}^*)>K-1$. This shows they are relevant features.

(II) This holds naturally using the proof above.

(III) Using the convexity of function $x^2$, it is easily proved that $\mathbb{E}a_j(\tc)\leq\mathbb{E}a_j(C^*)$ and the equality holds only when $\tc=C^*$. Therefore we know this proposition is valid.
\end{proof}

\subsection{Proof of Theorem \ref{theorem1}}
\begin{proof}
We take an omission for this relatively easy proof.
\end{proof}
\subsection{Proof of Theorem \ref{theorem2}}
\begin{proof}
Suppose $\tilde{\mathcal{C}}=(\tc_1,
\dots,\tc_K)$ is any partition we have known,
and the number of samples in both the $k‘^{th}$ cluster $\tc_{k’}$ and the $k^{th}$ cluster $C_{k}^*$ is $n\cdot \pi_{kk‘}$. $ECR({\tilde{\mathcal{C}}})=1-\sum_k\max_{k}(\pi_{kk’})$. Moreover, suppose $x_{ij}=v_{ij}+\mu_{ij}$ where $v_{ij}$ obeys the standard normal distribution.

Let us set $y=-\sum_jn(\frac{\sum_{i=1}^{n}x_{ij}}{n})^2$, then
based on Lemma \ref{ajreform}, $I_1\triangleq BCSS({\tilde{\mathcal{C}}})-y$ equals to
\begin{eqnarray}
 &&\sum_{j=1}^{p}
 \sum_{k'=1}^{K}(\frac{\sum_{i\in \tc_{k'}}v_{ij}}{\sqrt{n\tp_{k'}}}+
 \sqrt{n\tilde{\pi}_{k'}}\tilde{\mu}_{k'j})^2
\end{eqnarray}
where $\tilde{\mu}_{k'}\triangleq\frac{\sum_{k}\pi_{kk'}\mu_{kj}}{\tilde{\pi}_{k'}}$.
The same as $I_1$ we have $$I_2\triangleq BCSS(\mathcal{C^*})-y=
 \sum_{j=1}^{p}\sum_{k=1}^{K}(\frac{\sum_{i\in C^*_k}v_{ij}}{\sqrt{n\pi_k}}+\sqrt{n\pi_{k}}\mu_{kj})^2.$$
According to Lemma \ref{tailchi}, we have the tail bounds for $I_1$ and $I_2$:
$$\mathbb{P}(I_1-\mathbb{E}[I_1|{\mathcal{C}}^*]>t_1|{\mathcal{C}}^*)\leq\exp(-\frac{t_1^2}{16\mathbb{E}[I_1|{\mathcal{C}}^*]}) \quad\forall t_1\leq \mathbb{E}[I_1|{\mathcal{C}}^*],$$
and
$$\mathbb{P}(I_2-\mathbb{E}[I_2|\mathcal{C}^*]<-t_2|\mathcal{C}^*)\leq\exp(-\frac{t_2^2}{16\mathbb{E}[I_2|\mathcal{C}^*]}) \quad\forall t_2\leq \mathbb{E}[I_2|\mathcal{C}^*].$$
where $\mathbb{E}[I_1|{\mathcal{C}}^*]=pK+p^*n\sum_{k'}\tilde{\pi}_{k'}\tilde{\mu}_{k'}^2$ and
$\mathbb{E}[I_2|\mathcal{C}^*]=pK+p^*n\sum_{k}\pi_{k}\mu_{k}^2$.
Since for any $\tilde{\mathcal{C}}$, we have $\mathbb{E}[I_1|\mathcal{C}^*]\leq \mathbb{E}[I_2|\mathcal{C}^*]$(by Theorem \ref{optimial_partition_theory}), then for any $0<\epsilon<1$,
\begin{equation*}
\max\limits_{ECR(\mathcal{C})>\epsilon} \mathbb{P}(I_1-\mathbb{E}[I_1|\mathcal{C}^*]>t_1|{\mathcal{C}}^*)\leq
\exp(-\frac{t_1^2}{16\mathbb{E}[I_2|{\mathcal{C}}^*]}) \quad\forall t_1\leq \mathbb{E}[I_1|{\mathcal{C}}^*].
\end{equation*}
Moreover, assume that $$t_2=t_1=\frac{\mathbb{E}[I_2|{\mathcal{C}}^*]-\mathbb{E}[I_1|{\mathcal{C}}^*]}{2},$$
then for $\hat{\mathcal{C}}\in\arg\max_{\mathcal{C}}BCSS(\bX)$ we have
\begin{align}
&\mathbb{P}(ECR({\hat{\mathcal{C}}})\geq\epsilon|\mathcal{C}^*)<\mathbb{P}(\mathbb{E}[I_2|\mathcal{C}^*]-t_2>I_2|\mathcal{C}^*)\\
&+\mathbb{P}(\exists\ {\mathcal{C}}\ s.t.\ 1-\sum_k\max_{k'}(\pi_{k'k})\geq \epsilon\ \text{and } I_1>\mathbb{E}[I_1|\mathcal{C}^*]+t_1|{\mathcal{C}}^*)\\
&\leq \mathbb{P}(I_2-\mathbb{E}[I_2|\mathcal{C}^*]<-t_2|\mathcal{C}^*)+K^N\max\limits_{ECR(\mathcal{C})>\epsilon} \mathbb{P}(I_1-\mathbb{E}[I_1|\mathcal{C}^*]>t_1|{\mathcal{C}}^*)\\
&\leq 2\exp(\max\limits_{ECR(\mathcal{C})>\epsilon}\Big\{-\frac{t_1^2}{16\mathbb{E}[I_2|\mathcal{C}^*]}\Big\}+n\ln K).
\end{align}
The $\log$ value of the last term equals to
$$\max\limits_{ECR(\mathcal{C})>\epsilon}-\frac{(\mathbb{E}[I_2|\mathcal{C}^*]-\mathbb{E}[I_1|\mathcal{C}^*])^2}{64\mathbb{E}[I_2|\mathcal{C}^*]}+n\ln K=n\ln K-\min\limits_{ECR(\mathcal{C})>\epsilon}\frac{1}{64}\frac{p^*n(\sum_{k}\pi_{k}\mu_{k}^2-
\sum_{k'}\tilde{\pi}_{k'}\tilde{\mu}_{k'}^2)^2}{\frac{pK}{p^*n}+\sum_{k}\pi_{k}\mu_{k}^2}$$
\begin{equation}\label{temp2}
  \leq n\ln K-\min\limits_{ECR(\mathcal{C})>\epsilon}\frac{1}{64}\frac{p^*n(\sum_{k}\pi_{k}\mu_{k}^2-
\sum_{k'}\tilde{\pi}_{k'}\tilde{\mu}_{k'}^2)^2}{K+\sum_{k}\pi_{k}\mu_{k}^2}\ (\text{by the conditioan }p\leq p^*n).
\end{equation}
In order to estimate the last term of (\ref{temp2}), we should evaluate the quantity $\sum_{k'}\tilde{\pi}_{k'}\tilde{\mu}_{k'}^2$, thus we consider a following optimization problem:
\begin{align}\label{fepsilon}
&\max\sum_{k'}\tilde{\pi}_{k'}\tilde{\mu}_{k'}^2\\
&s.t.\left\{
\begin{array}{l}\nonumber
 1-\sum_{k'}\max_k\pi_{k,k'}\geq \epsilon\\
 \pi_{k,k'}\geq 0\\
 \sum_{k'} \pi_{k,k'}=\pi_k\\
 \sum_{k} \pi_{k,k'}=\tilde{\pi}_{k'}\\
 \forall k, k'=1,\dots,K
\end{array}
\right.
\end{align}
where $\tilde{\mu}_{k'}=\frac{\sum_{k}\pi_{k,k'}\mu_k}{\tilde{\pi}_{k'}}$. We suppose $f(\epsilon)$ is the maximum of the optimization (\ref{fepsilon}). Based on Lemma \ref{ecrlimit}, we know that $f$ is a decreasing function and $f(0)=\sum_{k}\pi_{k}\mu_{k}^2>f(\epsilon)\geq (\sum_k\pi_k\mu_k)^2$ for all $0<\epsilon<1-\max_k\pi_k$.
Thus, we have the estimation for (\ref{temp2}):
$$(\ref{temp2})\leq  n\ln K-\frac{1}{64}\frac{p^*n(f(0)-
f(\epsilon))^2}{f(0)+K}.$$
Define $F(\cdot)$ to be the inverse function of $128\frac{f(0)+K}{(f(0)-f(\epsilon))^2}\ln K$ (Obviously $F$ is a decreasing function), therefore when $\epsilon=F(p^*)$, we have
$$n\ln K-\frac{1}{64}\frac{p^*n(f(0)-
f(\epsilon))^2}{f(0)+K}<-n\ln K.$$
Because $f(\epsilon)\geq (\sum_k\pi_k\mu_k)^2$, $p^*$ has to be bigger than the constant
$$\kappa\triangleq 128\frac{\sum_k\pi_k\mu_k^2+K}{(\sum_k\pi_k\mu_k^2-(\sum_k\pi_k\mu_k)^2))^2}\ln K$$ to make $F(\cdot)$ well defined.

In summary, if $p^*\geq\kappa$ and $p\leq p^*n$, $\mathbb{P}(ECR(\hat{\mathcal{C}})>F(\epsilon)|\mathcal{C}^*)\leq 2K^{-n}$, which completes the proof.
\end{proof}

\subsection{Proof of Theorem \ref{theorem3}}
\begin{proof}
Suppose $x_{ij}=v_{ij}+\mu_{ij}$ where $v_{ij}$ obeys the standard normal distribution. Based on Lemma \ref{ajreform}, we have the formulation of any feature is (Non-standardization):
\begin{equation}\label{a_ja_j}
a_j=\sum_{k=1}^{K}(\frac{\sum_{i\in \tc_k}v_{ij}}{\sqrt{n\tp_k}}+\frac{\sum_{k'}\pi_{k',k}\mu_k'}{\tp_k}\sqrt{n}\mu)^2-
(\frac{\sum_{i=1}^{n}v_{ij}}{\sqrt{n}}+\frac{\sum_{i=1}^{n}\mu_{ij}}{\sqrt{n}})^2.
\end{equation}
In our next discussion, the last term of (\ref{a_ja_j}) will be omitted due to two reasons. First, the data matrix $\bX$ should be normalized in practice, thus the last term of (\ref{a_ja_j}) is zero for normalized data. Second, the last term of (\ref{a_ja_j}) is a constant regardless of the partition. What's more, discussing Theorem \ref{theorem3} only by the first $K$ terms of (\ref{a_ja_j}) shows the essential ideas for the proof. (we can send a proof with the discussion of the last term of (\ref{a_ja_j}) for interested readers).

For any partition ${\mathcal{C}}$ s.t. $ECR({\mathcal{C}})<F(p^*)$, $p^*$ is larger than $M$ (a constant) such that $K+\sum_{k}\pi_{k}\mu_{k}^2 n\geq\mathbb{E}[a_1|\mathcal{C}^*]\geq K+\frac{1}{2}\sum_{k}\pi_{k}\mu_{k}^2 n$. This can be achieved because when $\epsilon\rightarrow 0,$ $\tilde{\mu}_k\rightarrow \mu_k$ and $\tilde{\pi_k}\rightarrow \pi_k$ for any $k$.
Based on Lemma \ref{tailchi}, the probability that the smallest BCSS of the relevant features is smaller than $z=K+\frac{1}{4}\sum_{k}\pi_{k}\mu_{k}^2$ equals to
\begin{equation}\begin{split}
&\mathbb{P}\left\{\min_{j\leq p^*}a_j<K+\frac{1}{4}\sum_{k}\pi_{k}\mu_{k}^2n|\mathcal{C}^*\right\}\\&
=\mathbb{P}\left\{\exists j\leq p^*,a_j<K+\frac{1}{4}\sum_{k}\pi_{k}\mu_{k}^2n|\mathcal{C}^*\right\}\\&
\leq p^*\mathbb{P}\left\{a_1<K+\frac{1}{4}\sum_{k}\pi_{k}\mu_{k}^2n|\mathcal{C}^*\right\}\\&
\leq p^*\mathbb{P}\left\{a_1-\mathbb{E}a_1<-\frac{1}{4}\sum_{k}\pi_{k}\mu_{k}^2n|\mathcal{C}^*\right\}\\&
\leq p^*\exp\left(-n\mu^2\frac{(\sum_{k}\pi_{k}\mu_k^2)^2}{256\mathbb{E}a_1}\right)\\&
\leq p^*\exp\left(-n\mu^2\frac{(\sum_{k}\pi_{k}\mu_k^2)^2}{256(\frac{K}{n}+\sum_k\pi_{k}\mu_k^2)}\right)\\&
\leq p^*\exp\left(-n\frac{\sum_{k}\pi_{k}\mu_k^2}{257}\right)\ (\text{for large enough } n).
\end{split}
\end{equation}
Based on the conditions of the theorem, we have $P(\min\limits_{j\leq m}a_j>z|\mathcal{C}^*)\rightarrow 1$.

On the other hand, the probability that the biggest BCSS of noise features is bigger than $z$ equals to
\begin{equation}\begin{split}
&\mathbb{P}\left\{\max_{j> m}a_j>z|\mathcal{C}^*\right\}\\&
=\mathbb{P}\left\{\exists j> m,a_j>z|\mathcal{C}^*\right\}\\&
\leq (p-p^*)\mathbb{P}\left\{a_p>K+\frac{1}{4}\sum_{k}\pi_{k}\mu_{k}^2n\right\}\\&
 \leq (p-p^*)\exp\left(-\frac{(\sum_{k}\pi_{k}\mu_{k}^2n)^2}{256K}\right)\\&
\leq(p-p^*)\exp\left(-n\frac{\sum_{k}\pi_{k}\mu_k^2}{257}\right) (\text{ for large n}).\\&
\end{split}
\end{equation}
Now consider the partition $\hat{\mathcal{C}}\in\arg\max_{\mathcal{C}}BCSS(\bX)$. With the Stirling's approximation and Theorem \ref{theorem2}, we have the following estimates: 
\begin{eqnarray}\nonumber
  &&\mathbb{P}\left\{\text{for } \hat{\mathcal{C}}, \min_{j\leq p^*}a_j<\max_{j>p^*}a_j|\mathcal{C}^*\right\}\leq
  \mathbb{P}\left\{\exists {\mathcal{C}}\ s.t.\  ECR({\mathcal{C}})<\epsilon, \min_{j\leq p^*}a_j<\max_{j>p^*}a_j|\mathcal{C}^*\right\}+2K^{-n}\\
  &&\leq {n \choose \lfloor \epsilon n \rfloor}(K-1)^{\epsilon n}\left(\mathbb{P}\left\{\min_{j\leq p^*}a_j<y|\mathcal{C}^*\right\}+\mathbb{P}\left\{\max_{j>p^*}a_j>y|\mathcal{C}^*\right\}\right)+2K^{-n}\\
  &&\leq \exp\left\{H(\epsilon)n+\epsilon n\ln(K-1)+\ln(p)-n\frac{\sum_{k}\pi_{k}\mu_k^2}{257}\right\}+2K^{-n}\\
\end{eqnarray}
where $H(\epsilon)=-[\epsilon\log\epsilon+(1-\epsilon)\log(1-\epsilon)]$.
We can set $\epsilon$ so small yet positive such that
$$\exp\left\{H(\epsilon)n+\epsilon n\ln(K-1)+\ln(p)-n\frac{\sum_{k}\pi_{k}\mu_k^2}{257}\right\}<
\exp\left\{\ln(p)-n\frac{\sum_{k}\pi_{k}\mu_k^2}{258}\right\}.$$
Then $\mathbb{P}\left\{\text{for } \hat{\mathcal{C}}, \min_{j\leq p^*}a_j\geq \max_{j>p^*}a_j|\mathcal{C}^*\right\}\rightarrow 1$ if $p=o(\exp(n\frac{\sum_{k}\pi_{k}\mu_k^2}{258}))$ and $n\rightarrow\infty$. This completes the proof.
\end{proof}

\paragraph{Acknowledgements.}  Xiangyu Chang gratefully acknowledges the support
of National Basic Research Program of China (973 Program): 2013CB329404 and  Key Project of NSF of China: 11131006.
We thanks Prof. Jiangshe Zhang (Xi'an Jiaotong University) and Prof. Shuiwang Ji (Old Dominion University) for helpful discussions.
\bibliographystyle{unsrt}
\bibliography{sparseclustering}

\end{document}